\newtheorem{theorem}{Theorem}[section]
\newtheorem{proposition}[theorem]{Proposition}
\newcommand{\piot}{\pi^{\texttt{OT}}}
\newcommand{\setD}{\mathcal{D}}
\newcommand{\setH}{\mathcal{H}}
\newcommand{\setX}{\mathcal{X}}
\newcommand{\setY}{\mathcal{Y}}
\newcommand{\hatmu}{\hat{\mu}}
\newcommand{\inner}[2]{\left\langle #1,#2 \right\rangle}
\begin{document}

%

%

\twocolumn[

\aistatstitle{Coupled Flow Matching}

\aistatsauthor{ Wenxi Cai ${}^1$ \And Yuheng Wang ${}^1$ \And  Naichen Shi ${}^2$}

\aistatsaddress{ ${}^1$ University of Michigan \And ${}^2$Northwestern University} ]

\begin{abstract}
We introduce Coupled Flow Matching (CPFM), a framework that integrates controllable dimensionality reduction and high-fidelity reconstruction. CPFM learns coupled continuous flows for both the high-dimensional data $x$ and the low-dimensional embedding $y$, which enables sampling $p(y|x)$ via a latent-space flow and $p(x|y)$ via a data-space flow. Unlike classical dimension-reduction methods, where information discarded during compression is often difficult to recover, CPFM preserves the knowledge of residual information within the weights of a flow network. This design provides a bespoke controllability: users may decide which semantic factors to retain explicitly in the latent space, while the complementary information remains recoverable through the flow network. Coupled flow matching builds on two components: (i) an extended Gromov–Wasserstein optimal transport objective that establishes a probabilistic correspondence between data and embeddings, and (ii) a dual-conditional flow-matching network that extrapolates the correspondence to the underlying space. Experiments on multiple benchmarks show that CPFM yields semantically rich embeddings and reconstructs data with higher fidelity than existing baselines.

\end{abstract}

\section{Introduction}
Dimension reduction has long been a central tool in statistical learning for extracting informative low-dimensional embeddings $y$ from high-dimensional data $x$. A classical example is Principal Component Analysis (PCA)~\citep{shlens2014tutorialprincipalcomponentanalysis}, which efficiently captures dominant linear structures. Nonlinear approaches such as Laplacian Eigenmaps~\citep{10.1162/089976603321780317}, t-SNE~\citep{JMLR:v9:vandermaaten08a}, and UMAP~\citep{mcinnes2020umapuniformmanifoldapproximation} extend this principle to nonlinear mappings, with an emphasis on preserving local neighborhoods. These embeddings $y$ have proven effective for visualization, denoising, and downstream analysis, and are now indispensable across many scientific domains\citep{Kobak2019Art,Armstrong2021UMAP,Becht2019UMAP}. 

Nevertheless, dimension reduction inevitably loses information, as the mapping from high-dimensional $x$ to low-dimensional $y$ is non-invertible by nature. As a result, the global structure is hard to preserve. 
Furthermore, reconstructing $x$ from $y$ becomes challenging. 

Recent advances in generative models have sought to employ deep neural networks to establish statistical connections between $x$ and $y$. Autoencoders~\citep{doi:10.1126/science.1127647} and VAEs~\citep{kingma2022autoencodingvariationalbayes} enable simultaneous data compression and reconstruction using encoder and decoder neural networks. Generative Adversarial Networks (GANs)~\citep{goodfellow2014generativeadversarialnetworks}, such as Style-GAN~\citep{stylegan}, further demonstrate the power of learned latent spaces in generating high-fidelity samples from compact codes. Despite their success, these approaches still face two critical challenges: (i) the latent
representation $y$ often entangles multiple explanatory factors in $x$, making it difficult to interpret or disentangle task-relevant information, and (ii) the geometry
of the latent space is typically an implicit byproduct of training rather than an explicitly controlled structure.

In light of such limitations, we propose \textbf{controllability} as a design principle. We advocate a controllable dimensionality reduction paradigm that makes the choices of important information explicit. Specifically, the embedding $y$ should: (i) highlight user-specified or task-relevant statistics, (ii) enforce latent geometry through constraints or priors, and (iii) neglect nuisance variation while retaining the ability to recover it. This controllability principle yields three key benefits: targeted submanifold design guided by prior knowledge, improved interpretability, and informed generation of new data $x$.

To implement this principle, we propose \textbf{Coupled Flow Matching (CPFM)}, a controllable dimensionality reduction framework that learns bidirectional flows between high-dimensional data $x$ and low-dimensional embeddings $y$. CPFM operates in two stages.  

In the \textit{first stage}, we construct a probabilistic coupling between discrete samples of $x$ and $y$ via a generalized Gromov–Wasserstein optimal transport (GWOT). Our formulation incorporates a kernel function on the $x$-space to encode relational structure, semantic labels, or other user-specified priors. This kernel induces a transport cost that quantifies the distortion between relations in the original $x$-space and those in the embedding $y$-space, thereby allowing explicit control over the coupling. The resulting optimal transport plan $\pi^{\mathrm{OT}}$ is thus guided directly by user-defined knowledge. To overcome the computational hardness of generic OT, we design an alternating minimization algorithm for an entropy-regularized version of GWOT whose per-iteration complexity is $\mathcal{O}(n^2)$, where $n$ is the dataset size.  

In the \textit{second stage}, we extrapolate $\pi^{\mathrm{OT}}$ beyond the training samples to define conditional distributions $p(y|x)$ and $p(x|y)$ via DCMF. While classical flow matching transports samples between two fixed distributions, conditional variants leverage side information to learn flow fields. We extend this paradigm by introducing a dual conditional mechanism within a shared drift network: conditioning on $x$, the network predicts flows on $y$ to generate $p(y|x)$; conditioning on $y$, it predicts flows on $x$ to generate $p(x|y)$. During training, we develop a mute-masking strategy to ensure that only the
active direction contributes to the loss. This design leverages
the symmetry between the two conditional flows and
avoids redundant modeling.

We summarize our main contributions:  
\vspace{-0.2cm}  
\begin{itemize}  
\item \textbf{Generalized GWOT.} We propose a new kernelized quadratic optimal transport objective that embeds semantic priors and enables controllable alignment between data $x$ and embeddings $y$.  

\item \textbf{Efficient OT solver.} We develop a new alternating optimization algorithm for entropy-regularized GWOT with per-iteration complexity $\mathcal{O}(n^2)$, scalable to large datasets.  

\item \textbf{Dual conditional flow matching.} We design a new shared drift network with dual conditioning to jointly learn $p(y|x)$ and $p(x|y)$.  

\item \textbf{Comprehensive evaluation.} We implement CPFM on a range of dimensionality reduction and reconstruction benchmarks, highlighting superior performance in both embedding quality and generative fidelity.  
\end{itemize}  

\section{Related Work}
We review several lines of research that are most relevant to our framework.

\paragraph{Flow matching.}  
Flow Matching learns continuous-time velocity fields and has recently scaled to large-scale image generation~\citep{lipman2023flowmatchinggenerativemodeling}. Several extensions improve efficiency and flexibility: Rectified Flow straightens transport paths for accelerated sampling~\citep{liu2022flowstraightfastlearning}, with a variational extension introducing a variational objective into this framework~\citep{guo2025variationalrectifiedflowmatching}. Stochastic Interpolants unify diffusion and deterministic flows through interpolation-based objectives with likelihood control~\citep{albergo2023stochasticinterpolantsunifyingframework}. On the algorithmic side, Weighted CFM reduces variance via importance weighting~\citep{calvoordonez2025weightedconditionalflowmatching}, Context-Varying CFM adapts conditioning to changing contexts~\citep{generale2025conditionalvariableflowmatching}, and Optimal Flow Matching directly optimizes straight-line paths~\citep{tong2024improvinggeneralizingflowbasedgenerative}. Despite these advances, standard flow-matching models construct probability flows in fixed-dimensional spaces and do not provide a mechanism for dimensionality reduction.  

\paragraph{Cross-modal generative modeling.}  
Multimodal diffusion and flow-based models extend generation to joint distributions across modalities. For example, UniDiffuser fits multimodal joint distributions with a unified model~\citep{bao2023transformerfitsdistributionsmultimodal}, EasyGen~\citep{zhao2024easygeneasingmultimodalgeneration} combines bi-directional conditional diffusion with large language models for interactive cross-modal tasks, and Dual Diffusion Transformer~\citep{li2025dualdiffusionunifiedimage} provides a backbone for joint text–image modeling. While powerful, these models aim to capture cross-modal correspondences rather than to perform controllable dimension reduction.  

\paragraph{Bi-directional generative modeling.}  
Several bi-directional architectures have been proposed to jointly learn mappings between data and latent representations. Adversarially Learned Inference (ALI) and BiGAN~\citep{dumoulin2017adversariallylearnedinference} couple generator and inference networks through adversarial training. CycleGAN and MUNIT~\citep{huang2018multimodalunsupervisedimagetoimagetranslation} promote cycle consistency to learn unpaired inverse mappings. Multimodal VAEs (MVAE) extend VAEs with product-of-experts inference to handle missing modalities~\citep{wu2018multimodalgenerativemodelsscalable}. More recently, normalizing-flow approaches explicitly reduce reconstruction error through invertible transformations~\citep{lee2025latent}. However, these methods provide limited ability to regulate which information is preserved or discarded, hence lacking controllability.  

\paragraph{Optimal transport.}  
Optimal Transport (OT) offers a principled way to compare probability measures. Entropic regularization and the Sinkhorn algorithm~\citep{NIPS2013_af21d0c9} made OT scalable~\citep{peyré2020computationaloptimaltransport}, with further improvements such as stabilized sparse scaling~\citep{schmitzer2019stabilized}. Quadratic-form OT generalizes OT to quadratic objectives over couplings~\citep{wang2025quadraticformoptimaltransport}. Gromov–Wasserstein (GW) OT compares relational structures, with variants for averaging~\citep{peyre2016gromov} and fused forms that integrate features with geometry~\citep{vayer2018fusedgromovwassersteindistancestructured}. Recent work has studied the stability and algorithmic properties of entropic GW~\citep{rioux2024entropicgromovwassersteindistancesstability}. Nonetheless, existing OT methods remain confined to aligning discrete samples and generally lack mechanisms for generative modeling.  

\section{Preliminaries}
Before presenting the details of CPFM, we briefly review the necessary background in optimal transport and flow matching generative models. 

\subsection{Linear Entropic Optimal Transport}
Let $\mathcal{X} \subset \mathbb{R}^d$ and $\mathcal{Y} \subset \mathbb{R}^d$ denote the source and target domains. A transport cost $c:\mathcal{X}\times\mathcal{Y}\to\mathbb{R}$ measures the cost of moving mass between points. Given marginal distributions $\mu_{\mathcal{X}}$ and $\mu_{\mathcal{Y}}$, the Linear Entropic Optimal Transport (LEOT) problem seeks an optimal coupling $\pi$ that minimizes $
\inf_{\pi \in \Pi(\mu_{\mathcal{X}},\mu_{\mathcal{Y}})} 
   \int_{\mathcal{X}\times\mathcal{Y}} c(x,y)\, d\pi(x,y)
   + \varepsilon \,\mathrm{KL}\!\big(\pi \,\|\, \mu_{\mathcal{X}} \times \mu_{\mathcal{Y}}\big)$,
where $\Pi(\mu_{\mathcal{X}},\mu_{\mathcal{Y}})$ is the set of couplings with marginals $\mu_{\mathcal{X}}$ and $\mu_{\mathcal{Y}}$, $\varepsilon>0$ controls entropy regularization, and $\mathrm{KL}$ denotes Kullback–Leibler divergence.  

In the finite-sample case with $n$ support points, LEOT can be efficiently solved using the Sinkhorn algorithm, with per-iteration complexity $\mathcal{O}(n^2)$~\citep{NIPS2013_af21d0c9}.

\subsection{Quadratic Optimal Transport}
Quadratic Optimal Transport (QOT) generalizes the linear case by considering pairwise relations. It minimizes $
\inf_{\pi \in \Pi(\mu_{\mathcal{X}},\mu_{\mathcal{Y}})} 
   \iint_{\mathcal{X}^2 \times \mathcal{Y}^2} 
   c(x,x',y,y')\, d\pi(x,y)\, d\pi(x',y')$,
where $c:(\mathcal{X}\times\mathcal{X})\times(\mathcal{Y}\times\mathcal{Y}) \to \mathbb{R}^+$ takes inputs from two domains. Unlike LEOT, QOT naturally accommodates $\mathcal{X}$ and $\mathcal{Y}$ of different dimensions.  

In practice, finite-sample QOT with entropic regularization can be optimized via mirror descent, though with higher computational cost, typically $\mathcal{O}(n^3)$ per iteration~\citep{peyre2016gromov}. A widely used special case is Gromov–Wasserstein OT (GWOT), where the cost is defined as $
c_{\mathrm{GWOT}} = \big| d_\mathcal{X}(x,x') - d_\mathcal{Y}(y,y') \big|^2$, which
represents the distortion between intra-domain distances. Entropic GWOT admits an elegant variational formulation that enables alternating minimization with $\mathcal{O}(n^2)$ complexity~\citep{rioux2024entropicgromovwassersteindistancesstability}. In CPFM, we extend GWOT by introducing kernelized costs that encode richer forms of prior knowledge.

\subsection{Flow Matching}
\label{sec:flowmatching}
Flow matching (FM) provides a framework for learning generative models by transporting a base distribution $p_0$ (often $\mathcal{N}(0,I)$) to a target distribution $p_1$. The goal is to learn a time-dependent vector field $
u_\theta: \mathbb{R}^d \times [0,1] \to \mathbb{R}^d$ 
that induces an interpolating path $p_t$ between $p_0$ and $p_1$.  

Following the stochastic interpolants framework~\citep{albergo2023stochasticinterpolantsunifyingframework}, we construct reference paths by sampling $x(0) \sim p_0$, $x(1) \sim p_1$, and defining $x(t) = a_t x(0) + b_t x(1)$ for differentiable scalar functions $(a_t, b_t)$ with boundary conditions $(a_0,b_0)=(1,0)$ and $(a_1,b_1)=(0,1)$. The instantaneous conditional velocity is $
v_t(x(t); x(0),x(1)) := \tfrac{d}{dt} x(t) = \dot a_t x(0) + \dot b_t x(1)$,
and the optimal state-dependent drift is the conditional expectation $
u^*(x,t) = \mathbb{E}_{x(0),x(1)}\!\left[v_t(x(t); x(0),x(1)) \,\middle|\, x(t)=x \right]$.

A neural network $u_\theta$ is trained to approximate $u^*$ by minimizing the conditional flow-matching loss~\citep{lipman2023flowmatchinggenerativemodeling}:
\[
\mathcal{L}(\theta) 
= \mathbb{E}\!\left[\| u_\theta(x(t),t) - v_t(x(t)\mid x(0),x(1)) \|^2 \right],
\]
where the expectation is taken over $t \sim \mathrm{Uniform}[0,1]$, $x(0)\sim p_0$, and $x(1)\sim p_1$.  

During inference, sampling proceeds by drawing $x(0)\sim p_0$ and integrating the ODE $\tfrac{d}{dt}x(t)=u_\theta(x(t),t)$. generating final samples $x(1)$ that approximate $p_1$.  


\section{Method}
The goal of CPFM is to construct semantically meaningful samplers for the conditional distributions $p(x \mid y)$ and $p(y \mid x)$, given a training dataset $\mathcal{D}_x = \{x_i\}_{i=1}^n$. As discussed, our approach proceeds in two stages.  
\begin{figure}[h]
    \centering
\includegraphics[width=0.49\textwidth]{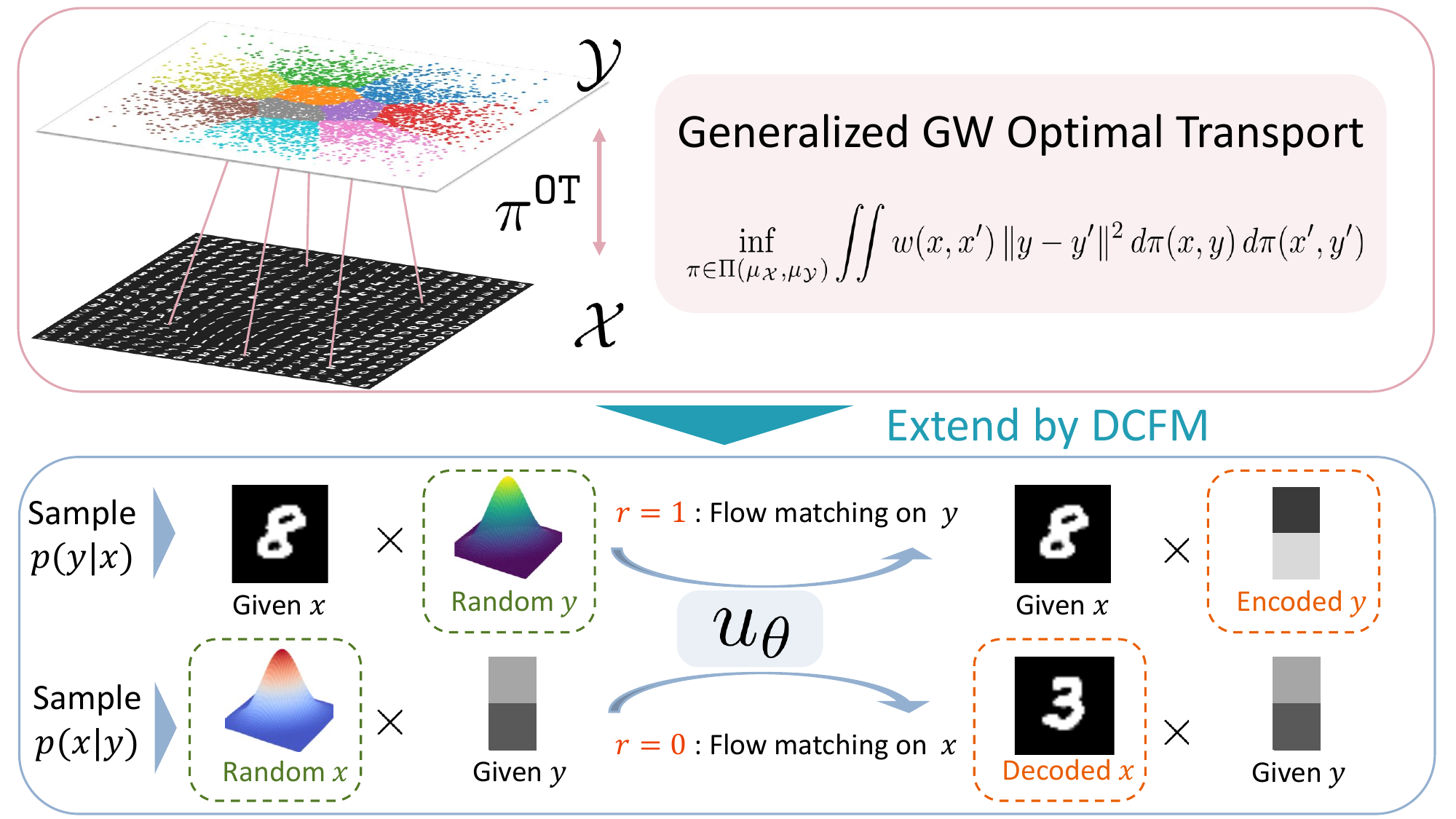}
    \caption{An overview of coupled flow matching.}
    \label{fig:sketch}
\end{figure}

In the \textit{first stage}, we draw $\mathcal{D}_y = \{y_i\}_{i=1}^n$ from the marginal distribution $\mu_\mathcal{Y}$ and establish a probabilistic correspondence between $\mathcal{D}_x$ and $\mathcal{D}_y$ via a generalized Gromov–Wasserstein optimal transport (GWOT). This coupling incorporates user-specified priors through kernelized costs to enable controllable alignment between data and embeddings. In the \textit{second stage}, we extend this coupling from the discrete training samples to the entire joint space $\mathcal{X} \times \mathcal{Y}$ by training a \emph{dual conditional flow matching} model. 

The overall pipeline of CPFM is illustrated in Figure~\ref{fig:sketch}. The following subsections present the technical details of each component. For brevity, proofs of propositions and theorems are deferred to the appendix. 

\subsection{Generalized Gromov-Wasserstein Optimal Transport}
Standard GWOT compares relational structures across domains but typically relies only on $\ell_p$ distances between raw data points. This makes it difficult to incorporate prior knowledge such as semantic labels, neighborhood graphs, or other task-relevant information. To overcome this limitation, we introduce a \emph{generalized} formulation that leverages kernel functions to encode flexible structures in the data space $\setX$.  

Formally, given marginals $\mu_\mathcal{X}$ and $\mu_\mathcal{Y}$, we introduce the following optimization problem
\begin{equation}
\label{eqn:otobjective}
\inf_{\pi \in \Pi(\mu_{\mathcal{X}}, \mu_{\mathcal{Y}})} 
    \iint_{\mathcal{X}^2 \times \mathcal{Y}^2} 
        k(x,x') \,\|y-y'\|^2 \, d\pi(x,y)\, d\pi(x',y'),
\end{equation}
where $k:\mathcal{X}\times\mathcal{X}\to\mathbb{R}$ is a symmetric kernel that captures user-specified relational information in $\mathcal{X}$. 

Formulation~\eqref{eqn:otobjective} encourages the semantic alignment between $(x,x')$ and $(y,y')$: when  $k(x,x')$ is large, meaning $x$ is similar to $x'$, $\|y-y'\|$ should be small, meaning $y$ is similar to $y'$ in the latent space. The semantic information is directly encoded into the transport objective through the kernel function, thus enabling practical flexibility.  

\begin{proposition}
Standard GWOT is a special case of~\eqref{eqn:otobjective} with $k(x,x') = -\|x-x'\|^2$.  
\end{proposition}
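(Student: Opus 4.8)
The plan is to expand the Gromov--Wasserstein cost into a quadratic-in-$\pi$ functional, observe that all but one of the resulting terms are independent of the coupling, and then match the remaining term with \eqref{eqn:otobjective}. Concretely, I would take ``standard GWOT'' to be the Euclidean instance in which the intra-domain relations are the squared distances, i.e.\ $d_{\mathcal X}(x,x')=\|x-x'\|^2$ and $d_{\mathcal Y}(y,y')=\|y-y'\|^2$, so that
\[
c_{\mathrm{GWOT}}(x,x',y,y') \;=\; \bigl(\|x-x'\|^2-\|y-y'\|^2\bigr)^2,
\]
and the GWOT problem is $\inf_{\pi\in\Pi(\mu_{\mathcal X},\mu_{\mathcal Y})} G(\pi)$ with $G(\pi)=\iint c_{\mathrm{GWOT}}\,d\pi(x,y)\,d\pi(x',y')$.

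The first step is the algebraic expansion $c_{\mathrm{GWOT}} = \|x-x'\|^4 - 2\|x-x'\|^2\|y-y'\|^2 + \|y-y'\|^4$, followed by integrating term by term (justified by Tonelli under a mild integrability hypothesis, e.g.\ compactly supported marginals, which holds in particular for the empirical measures used by the algorithm). The key observation is the \emph{marginal reduction}: since $\|x-x'\|^4$ depends on $(x,x')$ only, integrating it against $d\pi(x,y)\,d\pi(x',y')$ collapses onto the fixed first marginal, giving $\iint \|x-x'\|^4\,d\mu_{\mathcal X}(x)\,d\mu_{\mathcal X}(x') =: C_{\mathcal X}$, a constant independent of $\pi$; symmetrically the $\|y-y'\|^4$ term yields a constant $C_{\mathcal Y}$ depending only on $\mu_{\mathcal Y}$. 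The cross term equals $2\iint\bigl(-\|x-x'\|^2\bigr)\|y-y'\|^2\,d\pi(x,y)\,d\pi(x',y')$, which is exactly twice the objective \eqref{eqn:otobjective} evaluated at $k(x,x')=-\|x-x'\|^2$.

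Putting these together gives $G(\pi) = C_{\mathcal X}+C_{\mathcal Y} + 2\,F_k(\pi)$, where $F_k$ denotes the generalized objective \eqref{eqn:otobjective} with $k(x,x')=-\|x-x'\|^2$. Since $C_{\mathcal X}+C_{\mathcal Y}$ does not depend on $\pi$ and the factor $2$ is positive, $G$ and $F_k$ have the same set of minimizers over $\Pi(\mu_{\mathcal X},\mu_{\mathcal Y})$; adding the common entropic term $\varepsilon\,\mathrm{KL}(\pi\,\|\,\mu_{\mathcal X}\times\mu_{\mathcal Y})$ preserves this affine relation, so the entropic problems coincide as well, which is what ``special case'' means here. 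I expect the only genuine obstacle to be making the marginal-reduction step fully rigorous --- i.e.\ justifying the interchange of the double integral and the projection onto $\mu_{\mathcal X}$ (resp.\ $\mu_{\mathcal Y}$) --- together with fixing the convention that in ``standard GWOT'' the $\mathcal Y$-side relation is $\|y-y'\|^2$ so that it lines up with the $\|y-y'\|^2$ appearing in \eqref{eqn:otobjective}; the rest is routine algebra.
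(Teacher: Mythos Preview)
Your proposal is correct and follows essentially the same route as the paper: expand the squared-difference GW cost, observe that the pure $\|x-x'\|^4$ and $\|y-y'\|^4$ terms collapse to marginal-only constants, and identify the surviving cross term $-2\|x-x'\|^2\|y-y'\|^2$ with twice the generalized objective \eqref{eqn:otobjective} at $k(x,x')=-\|x-x'\|^2$. The paper carries this out in the discrete (finite-sample) setting rather than in measure-theoretic generality, but the algebra and the marginal-reduction argument are identical; your added remarks on integrability and the entropic term are sound refinements that the paper leaves implicit.
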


\paragraph{Finite-sample formulation.}  
In practice, we observe finite datasets $\mathcal{D}_x = \{x_i\}_{i=1}^n$ and $\mathcal{D}_y = \{y_j\}_{j=1}^n$, corresponding to empirical measures $\hat\mu_\mathcal{X} = \tfrac{1}{n}\sum_{i=1}^n \delta_{x_i}$ and $\hat\mu_\mathcal{Y} = \tfrac{1}{n}\sum_{j=1}^n \delta_{y_j}$. With a slight abuse of notation, the coupling $\pi \in \Pi(\hat\mu_\mathcal{X},\hat\mu_\mathcal{Y})$ can be represented as a matrix $\pi \in \mathbb{R}^{n\times n}$ with row-sum and column-sum equal to $\frac{1}{n}$, where $\pi_{ij}$ denotes the transported mass from $x_i$ to $y_j$. The finite-sample counterpart to~\eqref{eqn:otobjective} becomes
\begin{equation}
\label{eqn:discreteobjective}
\inf_{\pi \in \Pi(\hat\mu_\mathcal{X}, \hat\mu_\mathcal{Y})}
    \sum_{i,j,i',j'=1}^n 
    \pi_{ij}\pi_{i'j'} \, k(x_i,x_{i'}) \,\|y_j-y_{j'}\|^2.
\end{equation}
We denote its optimal solution by $\piot$.  

\paragraph{Variational reformulation.}  
Although~\eqref{eqn:discreteobjective} seems difficult to optimize as it involves quadratic terms in entries of $\pi$, we show that it admits a tractable variational representation. Let $G \in \mathbb{R}^{n\times n}$ be the kernel Gram matrix with entries $G_{ij}=k(x_i,x_j)$. Assuming it has rank $m$, we can write its Cholesky factorization as $G = \Phi^\top \Phi$ with $\Phi \in \mathbb{R}^{m\times n}$. The following theorem proposes a reformulation of~\eqref{eqn:discreteobjective}.

\begin{theorem}
\label{thm:usefulone}
The optimal transport plan $\piot$ is also the minimizer of
\begin{equation}
\label{eqn:discrete}
\begin{aligned}
&\inf_{\pi\in\Pi(\hatmu_{\setX},\hatmu_\setY)} \inf_{A\in \mathbb{R}^{m\times d_y}} 
\Biggl\{ \|A\|^2 \\
&+ \sum_{i=1}^n\sum_{j=1}^n 
    \pi_{ij}\Bigl(\|y_j\|^2 w(x_i) 
    - 2 \langle A, \Phi_i y_j^\top \rangle \Bigr)\Biggr\},
    \end{aligned}
\end{equation}
where $A$ is an auxiliary matrix, $\Phi_i$ is the $i$-th column of $\Phi$, $w(x_i) = \sum_{j=1}^n k(x_i,x_j)$, and the matrix inner product $\inner{\cdot}{\star}$ is defined as the inner product of their flattened vectors.
\end{theorem}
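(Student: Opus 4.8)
The plan is to show that the inner minimization over $A$ in~\eqref{eqn:discrete} can be carried out in closed form, and that substituting the optimal $A$ reduces~\eqref{eqn:discrete} to exactly~\eqref{eqn:discreteobjective}. The key observation is that the objective in~\eqref{eqn:discrete} is a strictly convex quadratic in $A$: the term $\|A\|^2$ is the squared Frobenius norm, and the cross term $-2\sum_{ij}\pi_{ij}\langle A,\Phi_i y_j^\top\rangle$ is linear in $A$. Hence for fixed $\pi$ the minimizer is obtained by setting the gradient to zero.

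First I would fix $\pi$ and define the matrix $B = B(\pi) := \sum_{i,j}\pi_{ij}\,\Phi_i y_j^\top \in \mathbb{R}^{m\times d_y}$, so that the $A$-dependent part of~\eqref{eqn:discrete} is $\|A\|^2 - 2\langle A, B\rangle$. Completing the square gives $\|A-B\|^2 - \|B\|^2$, which is minimized at $A^\star(\pi) = B(\pi)$ with optimal value $-\|B(\pi)\|^2$. Therefore the double infimum in~\eqref{eqn:discrete} collapses to
\[
\inf_{\pi\in\Pi(\hatmu_{\setX},\hatmu_\setY)}\Biggl\{\sum_{i,j}\pi_{ij}\|y_j\|^2 w(x_i) - \Bigl\|\sum_{i,j}\pi_{ij}\Phi_i y_j^\top\Bigr\|^2\Biggr\}.
\]

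Next I would expand the squared norm: $\|\sum_{i,j}\pi_{ij}\Phi_i y_j^\top\|^2 = \sum_{i,j,i',j'}\pi_{ij}\pi_{i'j'}\langle \Phi_i, \Phi_{i'}\rangle\,\langle y_j, y_{j'}\rangle$, and use $\langle\Phi_i,\Phi_{i'}\rangle = G_{ii'} = k(x_i,x_{i'})$ from the factorization $G=\Phi^\top\Phi$. Similarly I would rewrite the first term using $w(x_i) = \sum_{i'}k(x_i,x_{i'})$ and the column-sum constraint $\sum_{j'}\pi_{i'j'} = 1/n$ — wait, more carefully: $\sum_{i,j}\pi_{ij}\|y_j\|^2 w(x_i) = \sum_{i,j,i',j'}\pi_{ij}\pi_{i'j'}k(x_i,x_{i'})\|y_j\|^2$ after inserting $w(x_i)=\sum_{i'}k(x_i,x_{i'})$ and $1 = \sum_{j'}n\pi_{i'j'}$; I need to check the normalization here, symmetrizing in $(j,j')$ using $k(x_i,x_{i'})=k(x_{i'},x_i)$ to get $\tfrac12(\|y_j\|^2+\|y_{j'}\|^2)$. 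Combining, the bracketed quantity becomes $\sum_{i,j,i',j'}\pi_{ij}\pi_{i'j'}k(x_i,x_{i'})\bigl(\tfrac12\|y_j\|^2 + \tfrac12\|y_{j'}\|^2 - \langle y_j,y_{j'}\rangle\bigr) = \sum_{i,j,i',j'}\pi_{ij}\pi_{i'j'}k(x_i,x_{i'})\cdot\tfrac12\|y_j - y_{j'}\|^2$, which is~\eqref{eqn:discreteobjective} up to the harmless factor $\tfrac12$ (absorbed into the definition, or one rescales $A$).

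The main obstacle I anticipate is bookkeeping with the marginal constraints and normalization constants: the first term $\|y_j\|^2 w(x_i)$ must be re-expressed as a genuine double sum over $(i,j,i',j')$ weighted by $\pi_{ij}\pi_{i'j'}$, which requires carefully inserting the column-sum identity $\sum_{j'}\pi_{i'j'}=1/n$ and tracking the resulting factor of $n$; getting this constant exactly right (and reconciling it with any implicit $\tfrac12$) is the only delicate point. Everything else — convexity of the $A$-subproblem, completing the square, and expanding the Frobenius norm via $G=\Phi^\top\Phi$ — is routine. A minor point worth noting is that the minimum over $A$ is attained because the quadratic is coercive in $A$ (the $\|A\|^2$ term dominates), so the $\inf$ over $A$ is in fact a $\min$, justifying the interchange of the two infima.
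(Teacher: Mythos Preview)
Your approach is correct and essentially mirrors the paper's proof in reverse: the paper starts from the quadratic objective~\eqref{eqn:discreteobjective}, expands $\|y_j-y_{j'}\|^2=\|y_j\|^2+\|y_{j'}\|^2-2\langle y_j,y_{j'}\rangle$, simplifies the first two pieces using the marginal constraint to produce the $w(x_i)\|y_j\|^2$ term, and then introduces $A$ via the identity $-\|z\|^2=\inf_A\{\|A\|^2-2\langle A,z\rangle\}$ applied to $z=\sum_{ij}\pi_{ij}\Phi_i y_j^\top$; you instead eliminate $A$ from~\eqref{eqn:discrete} by completing the square and expand back, which is the same identity read the other way. Your caution about the normalization is warranted---the appendix actually uses $w_i=\sum_{i'}k(x_i,x_{i'})\,\hat\mu_{\setX}(x_{i'})$ (differing by a factor of $n$ from the main-text $w(x_i)$), and the equivalence holds up to the global factor of $2$ exactly as you anticipated.
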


Theorem~\ref{thm:usefulone} transforms the quadratic coupling into a problem linear in $\pi$, at the cost of introducing an auxiliary variable $A$. This reformulation simplifies the optimization algorithm design.  

\paragraph{Alternating minimization.}  
Building on the variational form~\eqref{eqn:discrete} of generalized GWOT, we develop an alternating optimization algorithm to obtain $\piot$. 

\ul{Step 1: Fix $\pi$, update $A$.} For fixed $\pi$, the subproblem is a quadratic function of $A$, with the optimal solution
\[
A^\star = \sum_{i=1}^n \sum_{j=1}^n \pi_{ij}\,\Phi_i y_j^\top .
\]

\ul{Step 2: Fix $A$, update $\pi$.} For fixed $A$, the cost reduces to a linear assignment problem 
\begin{equation}
\label{eq:subobj-pi}
\min_{\pi \in \mathbb{R}^{n\times n}} 
\sum_{i=1}^n\sum_{j=1}^n 
\pi_{ij}\,
\underbrace{\Bigl(\|y_j\|^2 w(x_i) 
- 2 \langle A, \Phi_i y_j^\top \rangle\Bigr)}_{=:~c(x_i,y_j)}.
\end{equation}

We add an entropic regularization $\varepsilon \sum_{i=1}^n \sum_{j=1}^n \pi_{ij}\big(\log \pi_{ij} - 1\big)$ to ensure numerical tractability, where $\varepsilon>0$ controls the strength of the regularization.
The resulting regularized objective becomes an LEOT problem that can be solved by invoking Sinkhorn iterations:
\[
\pi^\star \;\approx\; \mathrm{Sinkhorn}(c(x_i,y_j),\,\varepsilon).
\]

\begin{algorithm}[t]
\caption{Alternating optimization for~\eqref{eqn:discrete}}
\label{alg:opt}
\KwIn{Kernel factors $\Phi \in \mathbb{R}^{n\times m}$, auxiliary matrix $A$,
embedding matrix $Y \in \mathbb{R}^{n\times d_y}$, tolerance $\tau > 0$, regularization $\varepsilon > 0$}
\KwOut{transport plan $\pi \in \mathbb{R}^{n\times n}$, auxiliary matrix $A$, OT objective $\mathcal{L}_{\mathrm{OT}}$}

Initialize $\mathcal{L}_{\mathrm{OT}}^{(0)} \leftarrow +\infty$\;
Compute $Y_{\mathrm{norm}} \leftarrow \big(\|y_1\|^2,\dots,\|y_n\|^2\big)^\top$

Compute $w\leftarrow\text{Rowsum}(\Phi\Phi^\top)$

\While{True}{
    $C \leftarrow w\,Y_{\mathrm{norm}}^\top - 2\, \Phi A Y^\top$ \tcp{cost matrix}

    $\pi \leftarrow \mathrm{Sinkhorn}(C, \varepsilon)$ 

    $A \leftarrow \Phi^\top \pi Y$ 

    $\mathcal{L}_{\mathrm{OT}}^{(t)} \leftarrow \|A\|_F^2 + \langle C, \pi \rangle$ \tcp{current OT value}
    
    \If{$\mathcal{L}_{\mathrm{OT}}^{(t-1)} - \mathcal{L}_{\mathrm{OT}}^{(t)} < \tau$}{
        \textbf{break}
    }
}
\Return{$(\pi, A, \mathcal{L}_{\mathrm{OT}}^{(t)})$}
\end{algorithm}

Based on the alternating optimization scheme described above, we have the following proposition that certifies the convergence and per-iteration complexity of Algorithm~\ref{alg:opt}.

\begin{proposition}
\label{thm:con}
Algorithm~\ref{alg:opt} produces a non-increasing sequence of OT objectives $\{\mathcal{L}_{\mathrm{OT}}^{(t)}\}$ that converges to a stationary point, with per-iteration complexity $\mathcal{O}(n^2)$.
\end{proposition}

\paragraph{Practical considerations.}  
In practice, a small entropic regularization parameter $\varepsilon$ yields a more accurate approximation to the original~\eqref{eqn:discrete}, but slows down the convergence of the Sinkhorn subroutine. To address this, we develop an $\varepsilon$-scheduling scheme~\citep{schmitzer2019stabilized}. Specifically, we start from a larger $\varepsilon$, and then iteratively reduce it via bisection until reaching floating-point precision. This approach balances convergence speed and numerical stability. A detailed description of our $\varepsilon$-scheduling scheme is discussed in the supplementary material.

\subsection{Dual Conditional Flow Matching}
While $\pi^{\mathrm{OT}}$ provides a controllable correspondence between $\mathcal{D}_x$ and $\mathcal{D}_y$, it is defined only on finite samples. To generalize beyond the training support and construct conditional samplers on the entire space $\mathcal{X}\times \mathcal{Y}$, we introduce \emph{Dual Conditional Flow Matching} (DCFM), the second pillar of CPFM.  

\paragraph{Dual conditional flows.}  
DCFM learns probability flows in both spaces.  

To sample from $p(y| x)$ given $x$, we first draw $y(0) \sim p^y_0$ from a base distribution and integrate the ODE
\[
\frac{d}{dt} y(t) = u_y(y(t);x,t)
\]
to obtain $y(1)$, which serves as a sample from $p(y| x)$.  

Conversely, to generate from $p(x | y)$ given an embedding $y$, we draw $x(0) \sim p^x_0$ and integrate
\[
\frac{d}{dt} x(t) = u_x(x(t);y,t)
\]
to obtain $x(1)$, which serves as a sample from $p(x\mid y)$.  

The key observation is that $u_x$ and $u_y$ are structurally symmetric. We therefore parameterize both with a single neural network $u_\theta$, which outputs the drift for both $x$ and $y$. Specifically, we introduce a role flag $r \in \{0,1\}$:  
\[
u_{\theta}(\cdot, r) = 
\begin{cases}
u_x(\cdot)\,, & r=0, \\
u_y(\cdot)\,, & r=1.
\end{cases}
\]
In practice, $u_\theta$ is implemented as a shared backbone with two decoding heads, selected by $r$. This design leverages symmetry, avoids parameter redundancy, and enables joint parameter sharing.

\paragraph{Training objective.}  
Training balances two objectives: learning $u_x$ and learning $u_y$. We use $a_t$ and $b_t$ to denote two differentiable functions over $t$ that satisfy the boundary condition specified in Section~\ref{sec:flowmatching}.We first sample $(x(1),y(1)) \sim \pi^{\mathrm{OT}}$ from the optimal coupling.  

If $r=0$, we generate an interpolant path for $x$: sample $x(0)\sim p^x_0$, set $x(t)=a_t x(0)+b_t x(1)$, and define velocity $v_x(t)=\dot a_t x(0)+\dot b_t x(1)$. The conditional flow-matching loss is
\begin{equation}
\label{eqn:ellxdef}
\ell_x(u,t,x(t),y(1),v_x(t))
= \|u(x(t),y(1),t,0) - v_x(t)\|^2.
\end{equation}

If $r=1$, we instead interpolate $y$: sample $y(0)\sim p^y_0$, set $y(t)=a_t y(0)+b_t y(1)$, and define $v_y(t)=\dot a_t y(0)+\dot b_t y(1)$. The loss becomes
\begin{equation}
\label{eqn:ellydef}
\ell_y(u,t,x(1),y(t),v_y(t))
= \|u(x(1),y(t),t,1) - v_y(t)\|^2.
\end{equation}

The full DCFM training objective is a weighted combination of the two roles:
\begin{equation}
\label{eqn:dcfmloss}
\mathcal{L}_{\mathrm{DCFM}}(u)
= (1-\alpha)\,\mathbb{E}[\ell_x(u)] 
+ \alpha\,\mathbb{E}[\ell_y(u)],
\end{equation}
where $\alpha \in (0,1)$ balances the two objectives. Expectations are taken over random draws of $(t,x,y)$ and the role indicator $r \sim \mathrm{Bernoulli}(\alpha)$.  

\paragraph{Algorithmic summary.}  
The stochastic training and sampling procedure is summarized in Algorithm~\ref{alg:train}.

\begin{algorithm}[h]
\caption{Training DCFM by optimizing~\eqref{eqn:dcfmloss} }
\label{alg:train}
\KwIn{Dataset $\setD_x$, embedding dataset $\setD_y$ sampled from $\mu_{\setY}$, transport plan $\piot \in \mathbb R^{|\setD_x|\times|\setD_x|}$, Bernoulli parameter $\alpha$}
\KwOut{trained drift model $u_\theta$}

\While{not converged}{
    Sample a pair $(x(1),y(1))$ based on $\piot$
    
    Sample role $r \sim \mathrm{Bernoulli}(\alpha)$ \tcp*{decide train $x$ or train $y$.}

    Sample time $t \sim \mathrm{Uniform}[0,1]$ 
    
    \eIf{$r = 0$}{
        Sample $x(0) \sim p_0^x$ 
        
        Compute interpolant $x(t) = a_t x(0) + b_t x(1)$ 
        
        Compute velocity $v_x(t) = \dot a_t x(0) + \dot b_t x(1)$ 
        
        Compute loss $\ell(u_\theta) = \ell_x(u_\theta)$ defined in~\eqref{eqn:ellxdef}
    }{
        Sample $y(0) \sim p_0^y$ 
        
        Compute interpolant $y(t) = a_t y(0) + b_t y(1)$ 
        
        Compute velocity $v_t^y = \dot a_t y(0) + \dot b_t y(1)$ 
        
        Compute loss $\ell(u_\theta) = \ell_y(u_\theta)$ defined in~\eqref{eqn:ellydef}
    }
    
 Calculate $\nabla_\theta\ell(u_\theta)$ and update $\theta$ by AdamW.
}
\end{algorithm}

We now argue that the two objectives in~\eqref{eqn:dcfmloss} are not in conflict. In fact, the joint loss admits a solution that simultaneously approximates both conditional flows.  
\begin{theorem}\label{thm:dec}
Let $u^*$ denote the drift field corresponding to the optimal solution of~\eqref{eqn:dcfmloss}. Then $u^*$ satisfies
$u^*(z,c,t,0)=\mathbb E[\dot a_t x(0)+\dot b_t x(1) \mid x(t)=z,\,y(1)=c]$, and $u^*(c,z,t,1)=\mathbb E[\dot a_t y(0)+\dot b_t y(1) \mid y(t)=z,\, x(1)=c]$.
\end{theorem}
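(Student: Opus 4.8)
The plan is to decouple the two terms in $\mathcal{L}_{\mathrm{DCFM}}$ and recognize each as a standard conditional flow-matching objective, then invoke the classical fact that the $L^2$-minimizer of such an objective is the conditional expectation of the conditional velocity. The key structural observation is that the role indicator $r$ selects entirely disjoint ``slots'' of the network output: when $r=0$ the loss $\ell_x$ only involves $u(\cdot,\cdot,\cdot,0)$, and when $r=1$ the loss $\ell_y$ only involves $u(\cdot,\cdot,\cdot,1)$. Since $u_\theta$ has two separate decoding heads selected by $r$, the drift field $u(\cdot,\cdot,\cdot,0)$ and $u(\cdot,\cdot,\cdot,1)$ can be varied independently over the function class; hence minimizing the convex combination $(1-\alpha)\mathbb{E}[\ell_x] + \alpha\mathbb{E}[\ell_y]$ is equivalent to minimizing $\mathbb{E}[\ell_x]$ over the $r=0$ head and $\mathbb{E}[\ell_y]$ over the $r=1$ head separately, and the positive weights $(1-\alpha),\alpha$ do not affect either minimizer.

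Concretely, I would first write $\mathbb{E}[\ell_x(u)] = \mathbb{E}_{t,x(0),(x(1),y(1))}\, \| u(x(t),y(1),t,0) - v_x(t)\|^2$ where $(x(1),y(1))\sim\pi^{\mathrm{OT}}$, $x(0)\sim p_0^x$, $x(t)=a_t x(0)+b_t x(1)$, and $v_x(t)=\dot a_t x(0)+\dot b_t x(1)$. Conditioning on the event $\{x(t)=z,\,y(1)=c\}$ and using the standard bias–variance (orthogonality) decomposition of the squared error, the pointwise minimizer in the value $u(z,c,t,0)$ is the conditional mean
\[
u^*(z,c,t,0)=\mathbb{E}\!\left[\dot a_t x(0)+\dot b_t x(1)\ \middle|\ x(t)=z,\ y(1)=c\right],
\]
and this choice is attainable within the function class by the expressivity of the $r=0$ head (the same nonparametric-minimizer argument used to justify the conditional flow-matching loss in Section~\ref{sec:flowmatching}, now with the extra conditioning variable $y(1)$ playing the role of side information). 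The identical argument applied to $\mathbb{E}[\ell_y(u)]$, conditioning on $\{y(t)=z,\,x(1)=c\}$, gives
\[
u^*(c,z,t,1)=\mathbb{E}\!\left[\dot a_t y(0)+\dot b_t y(1)\ \middle|\ y(t)=z,\ x(1)=c\right].
\]

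The main point to argue carefully — and the only place where more than routine work is needed — is the \emph{non-interference} claim: that the two heads really are free parameters that can be optimized independently, so that a single $u^*$ attains the infimum of the sum by attaining the infimum of each term simultaneously. This follows from the role-flag parameterization (a shared backbone with two heads selected by $r$), together with the assumption that the backbone-plus-head family is rich enough to represent the two conditional expectations above; one should state this expressivity assumption explicitly, exactly as in the derivation of vanilla flow matching where the network is assumed able to represent $u^*$. I would also note the measure-theoretic caveat that the equalities hold $p_t$-almost everywhere in $z$ (and for a.e. $t$, and $\pi^{\mathrm{OT}}$-a.e. $c$), since the conditional expectation is only defined up to null sets; the finiteness of the second moments needed for the decomposition follows from the boundedness of $\dot a_t,\dot b_t$ on $[0,1]$ and the assumed finite second moments of $p_0^x,p_0^y$ and of the marginals of $\pi^{\mathrm{OT}}$.
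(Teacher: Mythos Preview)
Your proof is correct and uses the same core idea as the paper: the $L^2$-minimizer of a squared-error regression is the conditional expectation of the target given the inputs. The only difference is organizational. You first decouple the two terms by arguing that the $r=0$ and $r=1$ heads are independently variable, then apply the bias--variance decomposition twice. The paper instead keeps the two directions together by treating the role flag $r$ as just another input variable: it defines a single tuple $(Z,C,V,t)$ whose meaning depends on $r$, writes the whole loss as $\mathbb{E}\|u(Z,C,t,r)-V\|^2$, and applies the projection identity once with the $\sigma$-algebra $\mathcal{A}=\sigma(Z,C,r,t)$, obtaining $u^*(Z,C,t,r)=\mathbb{E}[V\mid \mathcal{A}]$ and reading off the two cases afterward. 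This sidesteps your ``non-interference'' discussion entirely, since at the population level $u$ is an arbitrary measurable function of its inputs (including $r$), so no architectural argument about separate heads is needed. Your route is perfectly valid but leans on the parametrization; the paper's is slightly cleaner because it stays purely nonparametric.
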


This theorem shows that minimizing the combined loss~\eqref{eqn:dcfmloss} yields a drift field that is consistent with both conditional velocity fields. 
Thus, integrating the $\dfrac{d}{dt} x(t)=u^*(x(t),y(1),t,0)$ or $\dfrac{d}{dt} y(t)=u^*(x(1),y(t),t,1)$ produces samples from $p(y| x)$ and $p(x| y)$ without interference between the two directions.

The inference procedures are summarized in Algorithm~\ref{alg:inference}.

\begin{algorithm}[h]
\caption{Inference with DCFM}
\label{alg:inference}
\KwIn{trained drift model $u_\theta$, conditioning variable $c$ (either $x$ or $y$), direction $r \in \{0,1\}$, number of integration steps $T$}
\KwOut{sample $\hat z$ in the target space}

\eIf{$r = 0$}{
    Sample initial $x_0 \sim p_0^x$\;
    Set $z_0 \gets x_0$\;
}{
    Sample initial $y_0 \sim p_0^y$\;
    Set $z_0 \gets y_0$\;
}

\For{$k=0$ \KwTo $T-1$}{
    Set $t = k/T$\;
    Compute drift $u = u_\theta(z_k,c,t,r)$ if $r=0$ or $u=u_\theta(c,z_k,t,r)$ if $r=1$\;
    Update state $z_{k+1} \gets z_k + \frac{1}{T}\cdot u$ \tcp*{Euler step}
}

Return $\hat z = z_T$\;
\end{algorithm}

\section{Experiments}
\label{sec:experiments}

In this section, we evaluate CPFM on four image generation datasets and one molecule generation dataset to showcase its performance in both dimensionality reduction and sample reconstruction. The implementation and training code for DCFM are available in an anonymous repository: \url{https://anonymous.4open.science/r/AISTATS-CEF9/}. We use two NVIDIA A10 GPUs for training $u_\theta$.

For all experiments, we set the embedding space $\mathcal{Y}$ to be two-dimensional ($\mathbb{R}^2$). This choice enables direct visualization of the learned embeddings and provides a stringent test case: mapping high-dimensional data into such a severely compressed space poses a challenging scenario for both representation learning and faithful reconstruction.   




\subsection{MNIST}
We begin with the MNIST dataset~\citep{deng2012mnist}, a benchmark collection of handwritten digit images ($0$–$9$), to illustrate the effectiveness of CPFM.  

\paragraph{Kernel construction.}  
Since MNIST is labeled, we design a kernel function that incorporates both visual similarity and label information. Inspired by Conditional Random Field models~\citep{efficientcrf}, we propose the composite kernel
\begin{equation}
\label{eqn:mnistkernel}
k_{\text{image}}(x_i, x_j) 
= \underbrace{\exp\!\left(-\tfrac{\|x_i - x_j\|^2}{2\sigma^2}\right)}_{\text{appearance kernel}}
\cdot 
\underbrace{\mathbf{1}\{\, l_i = l_j \,\}}_{\text{label kernel}},
\end{equation}
where $l_i$ and $l_j$ are the digit labels of images $x_i$ and $x_j$, $\mathbf{1}\{\cdot\}$ is an indicator function, and $\sigma$ is the bandwidth parameter estimated as $
\sigma = \frac{1}{n^2}\sum_{i,j=1}^n \|x_i - x_j\|$.

Here, the appearance kernel is a Gaussian (heat) kernel that captures pixel-level similarity, while the label kernel encodes semantic consistency by enforcing affinity only among images of the same digit. This construction ensures that the transport cost reflects both geometric similarity and label-based priors.  

\paragraph{Generalized GWOT.}  
With the kernel in~\eqref{eqn:mnistkernel}, we compute the Gram matrix of the MNIST images, draw latent samples $y$ from $\mu_\mathcal{Y}$, and apply Algorithm~\ref{alg:opt} to solve the generalized GWOT problem. 
The resulting embeddings are plotted in Figure~\ref{fig:MNIST_OT}. 

\begin{figure}[h]
    \centering
    \begin{subfigure}[t]{0.15\textwidth}
        \centering
        \includegraphics[width=\textwidth]{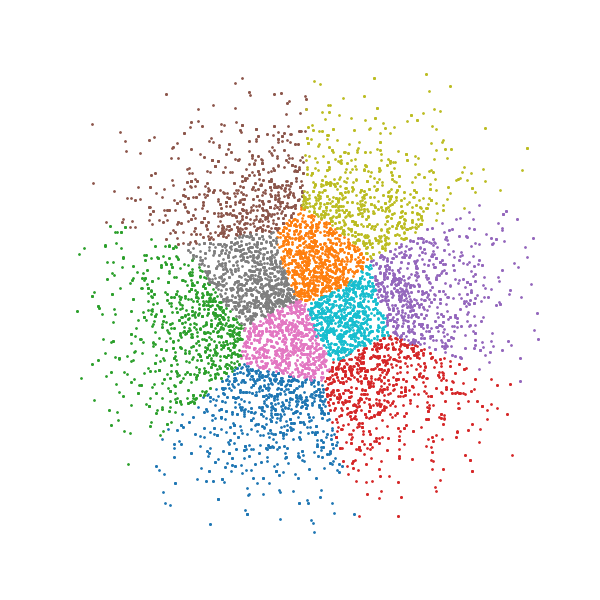}
    \end{subfigure}
    \begin{subfigure}[t]{0.15\textwidth}
        \centering
        \includegraphics[width=\textwidth]{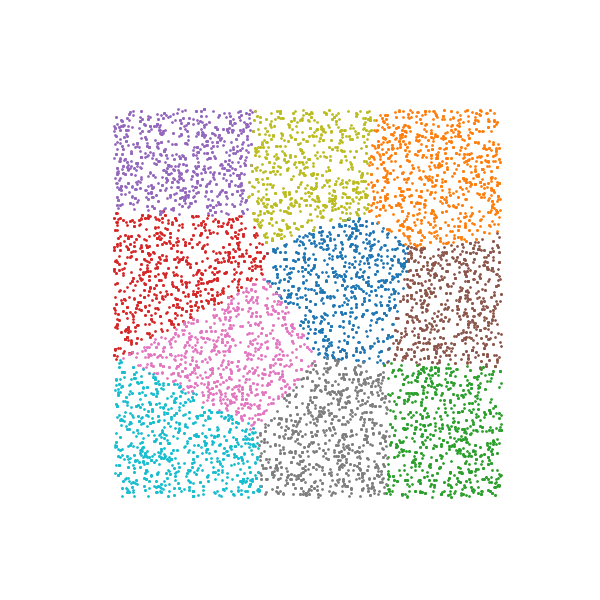}
    \end{subfigure}
    \begin{subfigure}[t]{0.15\textwidth}
        \centering
        \includegraphics[width=\textwidth]{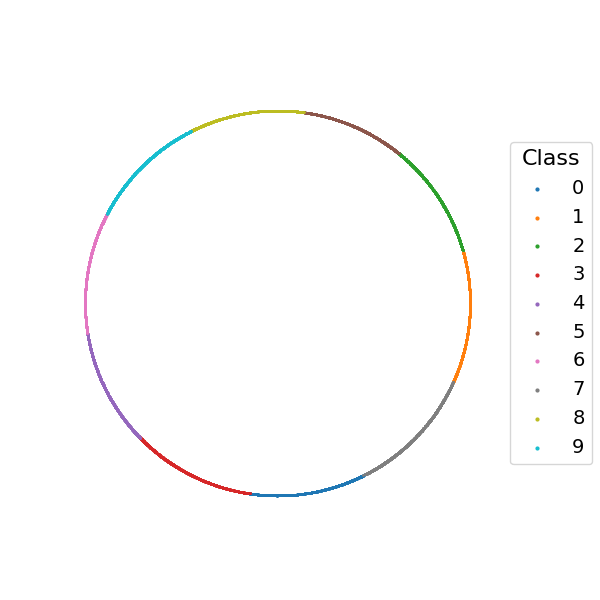}
    \end{subfigure}

    \caption{Based on the generalized GWOT transport plan $\pi$, each source sample $x$ is associated with a probability distribution over candidate embeddings ${y}$. A single embedding $y$ is randomly sampled according to its assigned weight $\pi_i$. The latent embedding distributions $\mu_{\setY}$ considered are: (Left) a Gaussian $\mathcal{N}(0, I_2)$; (Middle) a uniform distribution on the square $[-1,1]^2$; and (Right) a uniform distribution on the unit circle ${, y \in \mathbb{R}^2 : |y|=1 ,}$.
    }
    \label{fig:MNIST_OT}
\end{figure}

The generated embeddings align closely with the contour of the prescribed target distributions $\mu_\setY$, demonstrating that generalized GWOT can faithfully adapt to different latent marginals. Moreover, the embeddings exhibit well-separated clusters for different digit classes, reflecting the influence of the label-sensitive kernel. This separation indicates that Algorithm~\ref{alg:opt} could preserve geometric structure and semantic priors simultaneously.

\paragraph{DCFM reconstruction.}  
Building on the transport plan provided by generalized GWOT, we train DCFM using Algorithm~\ref{alg:train} to jointly perform dimensionality reduction into two dimensions and reconstruction back to the image space. After training, we sample $100$ random images, embed them into $\mathbb{R}^2$, and then reconstruct from the embeddings using the learned drift network $u_\theta$. Figure~\ref{fig:MNIST_embd_construct} shows both the original and reconstructed images. For clarity of visualization, the figure displays $1024$ embeddings in the latent space rather than $100$.

\begin{figure}[h]
    \centering
    \includegraphics[width=0.49\textwidth]{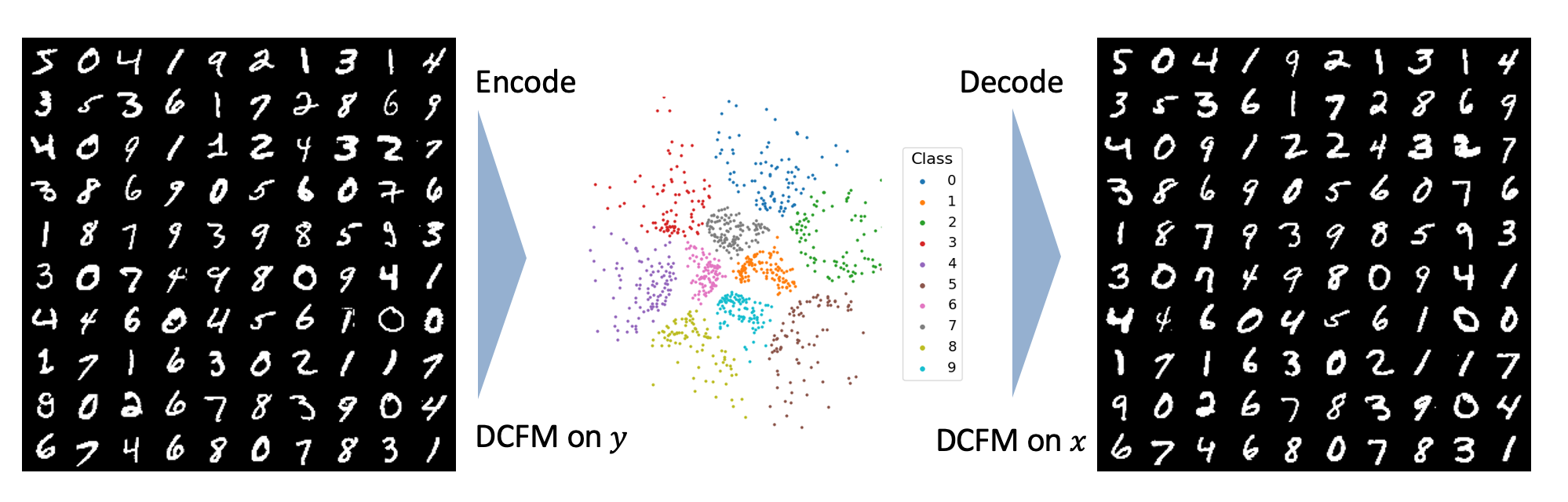}
    \caption{The MNIST dataset compressed into two dimensions by DCFM, 
    where different classes are distinguished by labels and then reconstructed. 
}
    \label{fig:MNIST_embd_construct}
\end{figure}

From Figure~\ref{fig:MNIST_embd_construct}, we observe that reconstructed digits preserve their class identity, underscoring DCFM’s ability to faithfully retain label information encoded in the discrete transport plan $\pi^{\mathrm{OT}}$. At the same time, the reconstructions exhibit subtle variations in handwriting style compared to the originals. This suggests that DCFM not only captures class-relevant features but also introduces diversity in nuisance attributes.

\subsection{Image datasets}
Beyond MNIST, we further evaluate CPFM on three widely used image benchmarks: \textbf{CIFAR-10}~\citep{krizhevsky2009learning}, \textbf{TinyImageNet}~\citep{le2015tiny}, and \textbf{AFHQ}~\citep{choi2020stargan}. Since we pretrain flow matching model $u_\theta$ from random initialization, we downsample the resolution of TinyImagenet and AFHQ to $64\times 64$, and $128\times 128$, respectively, to reduce the carbon footprint in the training process. 

\paragraph{Evaluation metrics}
\begin{table*}[h]
\centering
\begin{tabular}{ccccc}
   \toprule
   Distance to Gaussian $\downarrow$ & MNIST & CIFAR-10 & AFHQ & TinyImagenet \\
   \midrule
   KPCA & 245 $\pm$ 4 & 720 $\pm$ 60 & 48000 $\pm$ 5900 & 47000 $\pm$ 5000   \\
   VAE & 32 $\pm$ 5 & 52 $\pm$ 7 & 48 $\pm$ 7 & 49 $\pm$ 8 \\
   DiffAE & 4.96 $\pm$  0.04  & 7.40 $\pm$ 0.02 &28 $\pm$ 2 & 20.3 $\pm$ 0.2 \\
   Info-Diffusion & \ul{0.1608} $\pm$ 1e-06 & \ul{0.3092} $\pm$ 2e-06 & \ul{0.2996} $\pm$ 2e-07 & \ul{0.253}  $\pm$ 0.006  \\
   \midrule
   \textbf{CPFM} & \textbf{0.113} $\pm$ 0.002 &\textbf{ 0.17} $\pm$ 0.01 & \textbf{0.138} $\pm$ 0.008 & \textbf{0.25} $\pm$ 0.03 \\
   \bottomrule
   
\end{tabular}
\caption{Distance to Gaussian. Mean $\pm$ std calculated over 5 independent generations.}
\label{Table:OTandFid}
\vspace{-0.2cm}
\end{table*}

The primary evaluation metrics we use is the \textit{Wasserstein distance of the latent distribution to Gaussian distribution}.

\paragraph{Benchmarks.}  
We compare CPFM against a set of representative baselines, including KPCA~\citep{gedon2023invertible}, VAE~\citep{Kingma_2019}, DiffAE~\citep{diffusionae}, and Info-Diffusion~\citep{wang2023infodiffusion}. To ensure fairness, all diffusion and flow matching-based models are trained with the same network architecture and number of training iterations. Quantitative results are summarized in Table~\ref{Table:OTandFid}.   

From Table~\ref{Table:OTandFid}, it is evident that CPFM achieves the lowest FID and OT loss, which again confirms its superior dimension reduction and sample reconstruction capability. It is worth noting that the reported FID scores are substantially higher than those in the original papers. This discrepancy is expected, as we deliberately impose an extreme dimension reduction setting by restricting the latent space to only two dimensions. 

\subsection{QM9}
To further evaluate CPFM beyond image domains, we test it on the QM9 dataset~\citep{qm91,qm92}, a widely used quantum chemistry benchmark. We randomly sample 4,000 molecules for training and 1,000 for evaluation. The molecule attribute of interest is the \textit{dipole moment} (a continuous variable, measured in Debye). Each molecule is encoded with \textsc{SELFIES}~\citep{Krenn_2020}.
 A VAE is first trained to obtain 64-dimensional latent representations of molecules, which serve as $\setX$.  

\paragraph{Kernel construction.}  
We design a composite kernel that integrates both structural similarity and property similarity:
\begin{equation}
\label{eqn:molkernel}
k_{\text{mol}}(i,j) 
= \tfrac{1}{2}\,
\underbrace{\frac{\langle f_i, f_j\rangle}
       {\|f_i\|_1 + \|f_j\|_1 - \langle f_i, f_j\rangle}}_{\text{structure kernel}}
+ \tfrac{1}{2}\,
\underbrace{|\ell_i - \ell_j|}_{\text{property kernel}},
\end{equation}
where $f_i$ is the binary Morgan fingerprint~\citep{doi:10.1021/ci100050t} of molecule $i$, used to compute the Tanimoto similarity~\citep{Bajusz2015Tanimoto}, and $\ell_i$ is its dipole moment \citep{article_dipole}.
\vspace{-0.2cm}
\paragraph{Dimension reduction.}  
We apply generalized GWOT with the kernel in~\eqref{eqn:molkernel}, producing two-dimensional embeddings. The results are visualized in Figure~\ref{fig:real}. Compared with standard t-SNE, the embeddings obtained via Algorithm~\ref{alg:opt} exhibit smoother transitions in dipole moment values, indicating that the latent coordinates align more naturally with the underlying property. This highlights CPFM’s ability to produce chemically meaningful embeddings.

\begin{figure}[h]
\vspace{-0.2cm}
    \centering
    \begin{subfigure}[t]{0.2\textwidth}
        \centering
        \includegraphics[width=\textwidth]{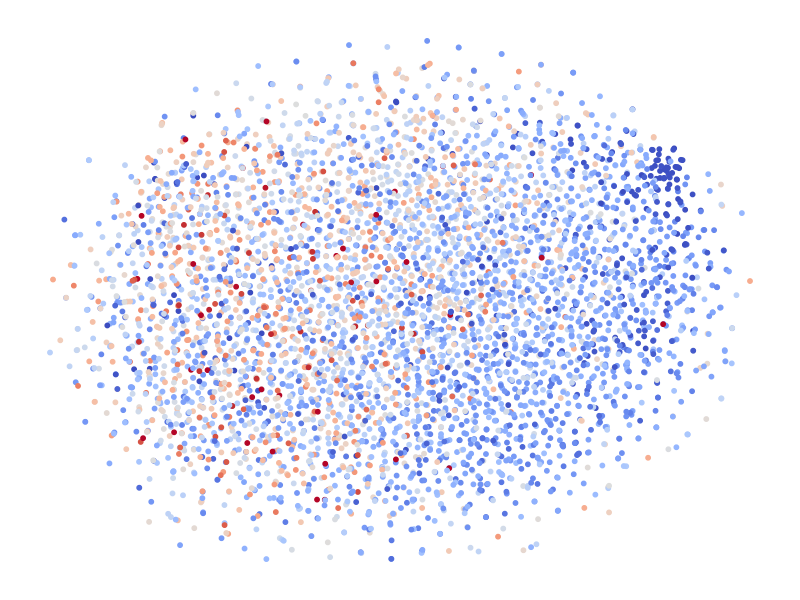}
    \end{subfigure}
    \begin{subfigure}[t]{0.2\textwidth}
        \centering
        \includegraphics[width=\textwidth]{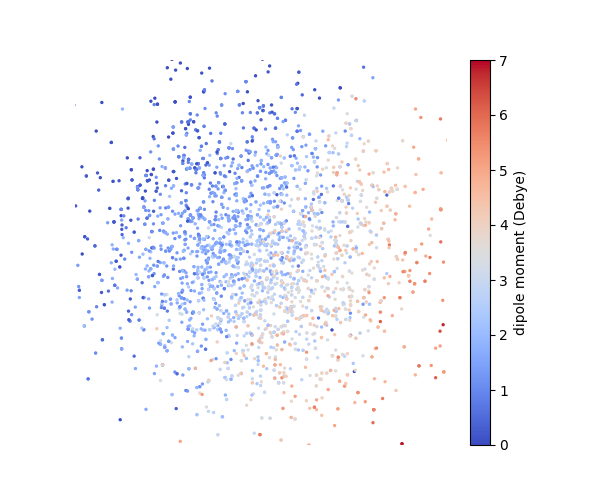}
    \end{subfigure}

    \caption{(Left) t-SNE visualization of the VAE latent space, where the labels do not show a clear separation. 
(Right) Embedding obtained from generalized GWOT, where the molecular embeddings vary continuously in label.
}
\vspace{-0.6cm}
    \label{fig:real}
\end{figure}

\vspace{-0.2cm}
\paragraph{Controlled generation.}  
Next, we train DCFM to learn a bidirectional mapping between the VAE latent space and the generalized GWOT embeddings. Figure~\ref{fig:real_recon} shows six examples of molecules and their reconstructions.  
\begin{figure}[h]
\vspace{-0.2cm}
    \centering
    \includegraphics[width=0.49\textwidth]{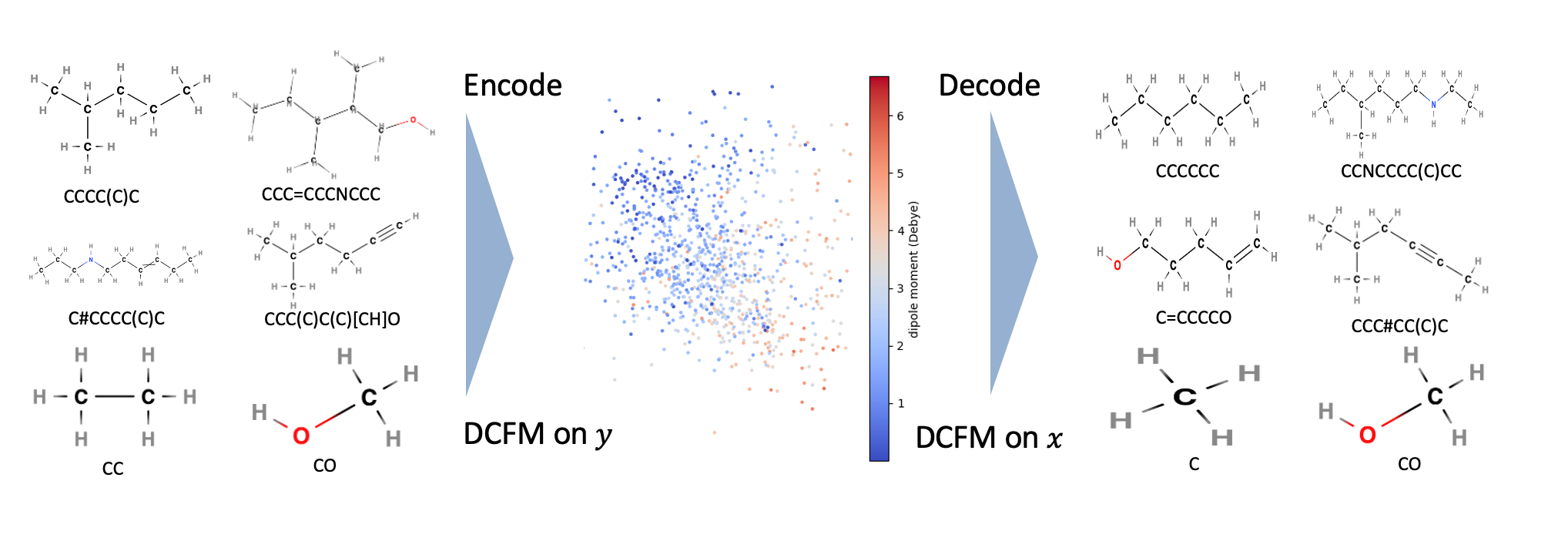}
    \caption{The QM9 dataset compressed into two dimensions by DCFM and then reconstructed. 
}
    \label{fig:real_recon}
    \vspace{-0.6cm}
\end{figure}

The reconstructions largely preserve atomic composition and molecular topology, demonstrating CPFM’s ability to retain chemically meaningful features even under drastic dimensionality reduction. 

\section{Discussion and Limitations}
This work introduced CPFM, a framework for controllable dimensionality reduction and reconstruction. While effective, training dual conditional flows from scratch remains computationally intensive on high-resolution datasets, suggesting the need for more efficient strategies such as multi-scale architectures.



\bibliographystyle{apalike}
\bibliography{reference}


\clearpage
\appendix
\thispagestyle{empty}

\onecolumn
\aistatstitle{Appendix}
The supplementary materials are organized as follows. Section~\ref{sec:practicalconsideration} elaborates on practical considerations of implementing CPFM. Section~\ref{sec:missingproof} presents the proofs to the theorems in the main paper. Section~\ref{sec:evaluationmetric} introduces the numerical evaluation metrics in detail. Finally, Section~\ref{sec:additionalexperiment} provides additional experiment results, including additional visualizations that illustrate the semantic coherence of the low-dimensional embeddings extracted by CPFM, visualizations of the DCFM architecture, and the choice of hyperparameters.
\section{Practical considerations}
\label{sec:practicalconsideration}
\subsection{$\varepsilon$-scheduling}
We improve Algorithm~\ref{alg:opt} by adding an adaptive schedule for $\varepsilon$ to balance entropy magnitude and numerical stability. A larger $\varepsilon$ makes the Sinkhorn subroutine updates faster and more stable, but it increases bias towards blurry assignments in $\piot$. A smaller $\varepsilon$ matches the unregularized quadratic objective better, but it may incur instability issues under floating-point arithmetic. Algorithm~\ref{alg:opt-adaptive} keeps the inner iteration stable while improving outer iterations through warm starts. Specifically, it maintains an interval between the last stable value $\varepsilon$ and a trial value $\varepsilon_{\mathrm{current}}$. If the inner call reaches the tolerance $\tau$ with no precision issues, we accept the iterate $(\pi,A)$ as a warm start and then halve $\varepsilon_{\mathrm{current}}$. If a precision limit is detected, we backtrack by moving $\varepsilon_{\mathrm{current}}$ to the midpoint between the failed value and the last stable value. The procedure stops when $|\varepsilon_{\mathrm{current}}-\varepsilon|<\delta$ and we return the transport plan computed at the last stable $\varepsilon$.
\begin{algorithm}[h]
\caption{Alternating optimization with adaptive entropic regularization}
\label{alg:opt-adaptive}
\KwIn{kernel weights $w \in \mathbb{R}^n$, feature matrix $\Phi \in \mathbb{R}^{n\times m}$, 
embedding matrix $Y \in \mathbb{R}^{n\times q}$, tolerance $\tau > 0$, initial regularization $\varepsilon > 0$, regularization tolerance $\delta$}
\KwOut{transport plan $\pi \in \mathbb{R}^{n\times n}$, OT objective $\mathcal{L}_{\mathrm{OT}}$}

Initialize $\varepsilon_{\text{current}} \leftarrow \varepsilon$\;
Initialize $A \leftarrow 0$\;

\While{True}{
    \If{Algorithm 1 $(w, \Phi, A, Y, \tau, \varepsilon_{\text{current}})$ reaches precision limitation}{
        $\varepsilon_\text{current}, \varepsilon \leftarrow \frac{\varepsilon + \varepsilon_\text{current}}{2},\varepsilon_\text{current}$\tcp*{bi-section search to determine the smallest stable $\varepsilon$.}
    }
    \Else{
        $\pi, A, \mathcal{L}_{\mathrm{OT}} \leftarrow \text{Algorithm 1}(w, \Phi, A, Y, \tau, \varepsilon_{\text{current}})$\;
        $\varepsilon_\text{current}, \varepsilon\leftarrow \frac{ \varepsilon_\text{current}}{2}, \varepsilon_\text{current}$\tcp*{halve $\varepsilon$.}
    }
    \If {$\big|\varepsilon_{\text{current}} - \varepsilon\big| < \delta$}{
        break\;
    }
}
\Return{$(\pi, \mathcal{L}_{\mathrm{OT}}^{(t)})$}
\end{algorithm}

In all experiments, we implement the generalized GWOT optimization with Algorithm~\ref{alg:opt-adaptive} and set the initial $\varepsilon$ to 0.01.

\subsection{Choice of $m$}
In the variational reformulation, we compute the feature map $\Phi$ using the Pivoted Cholesky feature map. Given the kernel Gram matrix $G=[k(x_i,x_j)]_{i,j=1}^n$, the algorithm compute the feature vector $\Phi_i\in\mathbb{R}^{m}$ for each sample such that
\[
\Phi^\top \Phi \approx G,
\]
where $\Phi = [\Phi_1, \ldots, \Phi_n]$.

Since $n$ can be huge in practice, Pivoted Cholesky factorization automatically selects rank $m$ to reach a target level of explained variance ratio $\eta\in[0,1]$. 
In all experiments, we choose the target explained variance ratio $\eta=0.95$.

\subsection{Interpolation of $\piot$} Despite the variational formulation introduced, Algorithm~\ref{alg:opt} still incurs $O(n^2)$ per-iteration complexity. This becomes infeasible in our computational infrastructure when  $n\ge10{,}000$. We therefore subsample $10{,}000$ points from the entire dataset, compute $\piot$ on this subset, and interpolate the plan to the full set to obtain $\widehat{\piot}$. For a point $x$, let $\mathcal{N}_k(x)$ be its $k$ same-class neighbors in the subset and define weights
\[
\alpha_j(x)=\frac{\exp\!\big(-\|x-x_j\|_2\big)}{\sum_{m\in\mathcal{N}_k(x)}\exp\!\big(-\|x-x_m\|_2\big)},\quad j\in\mathcal{N}_k(x).
\]
The row-interpolated transport plan is
\[
\widehat{\piot}(x,\cdot)=\sum_{j\in\mathcal{N}_k(x)} \alpha_j(x)\,\piot(x_j,\cdot).
\]
During DCFM training, we sample from a mixture of $\piot$ and the interpolated plan $\widehat{\piot}$.

\section{Missing proofs}
\label{sec:missingproof}
We define \(w_i=\sum_{i'=1}^n k(x_i,x_{i'})\, \hat\mu_{\mathcal{X}}(x_{i'})\), and \(k(x_i,x_{i'})\) can be written as $k(x_i, x_j) = \langle\Phi_i,\Phi_j\rangle_{\mathbb{R}^m}$ based on $G = \Phi^\top \Phi$. This can be considered as the discrete version of decomposition in a Hilbert space \(\setH\):  
    \[
    k(x_i,x_{i'}) = \langle \varphi(x_i), \varphi(x_{i'}) \rangle_{\setH}
    \]
    for some feature map \(\varphi: \mathcal{X} \to \setH\).
\begin{theorem}\label{thm:eqn} We have the following equivalent formulation of the optimal transport problem:
\begin{equation}
\begin{aligned}
&\inf_{\pi \in \Pi(\hat\mu_\mathcal{X}, \hat\mu_\mathcal{Y})}
    \sum_{i,j,i',j'=1}^n 
    \pi_{ij}\pi_{i'j'} \, k(x_i,x_{i'}) \,\|y_j-y_{j'}\|^2. \\
&\quad=2\inf_{\pi\in\Pi(\hatmu_{\setX},\hatmu_\setY)} \inf_{A\in \mathbb{R}^{m\times d_y}} 
\Biggl\{ \|A\|^2 \\
&+ \sum_{i=1}^n\sum_{j=1}^n 
    \pi_{ij}\Bigl(\|y_j\|^2 w(x_i) 
    - 2 \langle A, \Phi_i y_j^\top \rangle \Bigr)\Biggr\}.
\end{aligned}
\end{equation}
\end{theorem}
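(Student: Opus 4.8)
The plan is to expand the quadratic cost into an explicit quadratic form in the entries of $\pi$, collapse one of the two index pairs using the marginal constraints, and then linearize the single remaining quadratic term by completing the square in an auxiliary matrix. First I would write $\|y_j-y_{j'}\|^2=\|y_j\|^2+\|y_{j'}\|^2-2\langle y_j,y_{j'}\rangle$ inside the fourfold sum. Relabeling the two index pairs $(i,j)\leftrightarrow(i',j')$ and using that $k$ is symmetric, the $\|y_j\|^2$ and $\|y_{j'}\|^2$ contributions are equal, so the objective becomes $2\sum_{i,j,i',j'}\pi_{ij}\pi_{i'j'}k(x_i,x_{i'})\|y_j\|^2-2\sum_{i,j,i',j'}\pi_{ij}\pi_{i'j'}k(x_i,x_{i'})\langle y_j,y_{j'}\rangle$. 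In the first sum I would collapse the $(i',j')$ indices via $\sum_{j'}\pi_{i'j'}=\hat\mu_\mathcal{X}(x_{i'})$, which converts $\sum_{i',j'}\pi_{i'j'}k(x_i,x_{i'})$ into $w_i=\sum_{i'}k(x_i,x_{i'})\hat\mu_\mathcal{X}(x_{i'})$, leaving $2\sum_{i,j}\pi_{ij}\|y_j\|^2 w_i$.

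Next I would rewrite the cross term using the factorization $k(x_i,x_{i'})=\langle\Phi_i,\Phi_{i'}\rangle$. The rank-one identity $\langle\Phi_i y_j^\top,\Phi_{i'}y_{j'}^\top\rangle=\langle\Phi_i,\Phi_{i'}\rangle\langle y_j,y_{j'}\rangle$ for the flattened matrix inner product collapses the cross term into $-2\,\|B(\pi)\|^2$, where $B(\pi):=\sum_{i,j}\pi_{ij}\Phi_i y_j^\top\in\mathbb{R}^{m\times d_y}$. Hence, for every feasible $\pi$, the original objective equals $2\big(\sum_{i,j}\pi_{ij}\|y_j\|^2 w_i-\|B(\pi)\|^2\big)$.

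The key step is the variational lift of the concave term $-\|B(\pi)\|^2$: for any matrix $B$ one has $-\|B\|^2=\inf_A\{\|A\|^2-2\langle A,B\rangle\}$, attained at $A=B$, by the completion of squares $\|A\|^2-2\langle A,B\rangle=\|A-B\|^2-\|B\|^2$. Substituting $B=B(\pi)$ and expanding $\langle A,B(\pi)\rangle=\sum_{i,j}\pi_{ij}\langle A,\Phi_i y_j^\top\rangle$ shows $\sum_{i,j}\pi_{ij}\|y_j\|^2 w_i-\|B(\pi)\|^2=\inf_A\big\{\|A\|^2+\sum_{i,j}\pi_{ij}(\|y_j\|^2 w_i-2\langle A,\Phi_i y_j^\top\rangle)\big\}$. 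Since this identity holds pointwise in $\pi$, taking $\inf_\pi$ on both sides gives the claimed equality of optimal values with the factor $2$; and because the inner minimization over $A$ leaves the dependence on $\pi$ unchanged up to that constant factor, the minimizing coupling is the same on both sides, i.e. $\piot$ solves the lifted problem of Theorem~\ref{thm:usefulone}.

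I expect the only delicate points to be bookkeeping: tracking the factor $2$ and the exact normalization absorbed into $w_i$ (whether $\hat\mu_\mathcal{X}(x_{i'})$ carries the $1/n$), the relabeling argument that equates the two norm terms, and verifying the rank-one Frobenius identity. The genuine content is the observation that the obstruction to linearity in $\pi$, namely $-\|B(\pi)\|^2$, is concave in $\pi$ and hence expressible as an infimum of functions affine in $\pi$; this is exactly what makes the reformulation amenable to the alternating Sinkhorn scheme of Algorithm~\ref{alg:opt}.
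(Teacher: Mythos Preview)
Your proposal is correct and follows essentially the same approach as the paper's proof: expand $\|y_j-y_{j'}\|^2$, collapse one index pair via the marginal constraints to produce the $w_i$ term, use the factorization $k(x_i,x_{i'})=\langle\Phi_i,\Phi_{i'}\rangle$ together with the rank-one Frobenius identity to write the cross term as $-\|B(\pi)\|^2$, and then apply the completion-of-squares identity $-\|B\|^2=\inf_A\{\|A\|^2-2\langle A,B\rangle\}$ to linearize in $\pi$. Your flagged bookkeeping concern about whether $w_i$ carries the $1/n$ is apt, since the main text and the appendix use slightly different normalizations.
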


\begin{proof}
We decompose the quadratic cost as
\begin{equation}\label{eq:decomp}
\sum_{i,i'=1}^n \sum_{j,j'=1}^n k(x_i,x_{i'}) \, \|y_j-y_{j'}\|^2 \, \pi_{ij}\,\pi_{i'j'}
= I_1 + I_2 - 2I_3,
\end{equation}
where
\begin{align}
I_1 &= \sum_{i,i'=1}^n \sum_{j,j'=1}^n k(x_i,x_{i'})\, \|y_j\|^2 \,  \pi_{ij}\,\pi_{i'j'}, \label{eq:I1}\\
I_2 &= \sum_{i,i'=1}^n \sum_{j,j'=1}^n  k(x_i,x_{i'})\, \|y_{j'}\|^2 \, \pi_{ij}\,\pi_{i'j'}, \label{eq:I2}\\
I_3 &= \sum_{i,i'=1}^n \sum_{j,j'=1}^n k(x_i,x_{i'})\,\langle y_j, y_{j'} \rangle \,  \pi_{ij}\,\pi_{i'j'}. \label{eq:I3}
\end{align}

For \(I_1\), using the definition of \(w_i=\sum_{i'}k(x_i,x_{i'})\,\hat\mu_{\mathcal{X}}(x_{i'})\),
\begin{equation}\label{eq:I1-simplify}
I_1 = \sum_{i=1}^n\sum_{j=1}^n w_i\,\|y_j\|^2\,  \pi_{ij}.
\end{equation}
Similarly,
\begin{equation}\label{eq:I2-simplify}
I_2 = \sum_{i'=1}^n\sum_{j'=1}^n w_{i'}\,\|y_{j'}\|^2\,  \pi_{i'j'}.
\end{equation}

For \(I_3\), using the kernel representation \(k(x_i,x_{i'})=\langle\Phi(x_i),\Phi(x_{i'})\rangle_{\mathbb{R}^m}\), we obtain
\begin{equation}\label{eq:I3-kernel}
I_3 = \sum_{i,i'=1}^n\sum_{j,j'=1}^n
\langle \Phi(x_i),\Phi(x_{i'})\rangle_{\mathbb{R}^m} \,\langle y_j,y_{j'}\rangle \,
\pi_{ij}\,\pi_{i'j'}.
\end{equation}
Let \(m_i=\sum_{j=1}^n \frac{y_j \,\pi_{ij}}{\hat\mu_{\mathcal{X}}(x_i)}\), then
\begin{equation}\label{eq:I3-mean}
I_3 = \Bigl\|\sum_{i=1}^n\Phi(x_i)\times m_i\, \hat\mu_{\mathcal{X}}(x_i)\Bigr\|^2_{\mathbb{R}^m\times \mathbb{R}^{d_y}}.
\end{equation}

Applying the identity
\[
-\,\|z\|^2 = \inf_{A\in \mathbb{R}^m\times \mathbb{R}^{d_y}} \Bigl\{\|A\|^2_{\mathbb{R}^m\times \mathbb{R}^{d_y}} - 2\langle A,z\rangle_{\mathbb{R}^m\times \mathbb{R}^{d_y}}\Bigr\},
\]
to \eqref{eq:I3-mean}, we obtain
\begin{equation}\label{eq:I3-inf}
-I_3 = \inf_{A\in \mathbb{R}^m\times \mathbb{R}^{d_y}}\Bigl\{\|A\|^2_{\mathbb{R}^m\times \mathbb{R}^{d_y}}
-2\!\!\sum_{i=1}^n\sum_{j=1}^n \langle A,\Phi(x_i)\times y_j\rangle_{\mathbb{R}^m\times \mathbb{R}^{d_y}}\, \pi_{ij}\Bigr\}.
\end{equation}

Combining \eqref{eq:I1-simplify}, \eqref{eq:I2-simplify}, and \eqref{eq:I3-inf}, the overall problem admits the equivalent formulation
\[
\inf_{\pi\in\Pi(\hat\mu_{\mathcal{X}},\hat\mu_{\mathcal{Y}})}\;\inf_{A\in \mathbb{R}^m\times\mathbb{R}^{d_y}}
2\Bigl\{\|A\|^2_{\mathbb{R}^m\times\mathbb{R}^{d_y}}
+\!\!\sum_{i=1}^n\sum_{j=1}^n\Bigl(\|y_j\|^2\,w_i
-2\langle A,\Phi(x_i)\times y_j\rangle_{\mathbb{R}^m\times\mathbb{R}^{d_y}}\Bigr)\pi_{ij}\Bigr\},
\].
\end{proof}

\begin{proposition}
\label{thm:con}
Algorithm~\ref{alg:opt} produces a non-increasing sequence of OT objectives $\{\mathcal{L}_{\mathrm{OT}}^{(t)}\}$ that converges to a stationary point, with per-iteration complexity $\mathcal{O}(n^2)$.
\end{proposition}
\begin{proof}
We interpret Algorithm~\ref{alg:opt} as minimizing the entropy-regularized objective:
\[
\mathcal{L}_{\mathrm{OT}}(\pi,A)
=\|A\|_F^2+\langle C(A),\pi\rangle
+ \varepsilon\sum_{i,j}\bigl(\pi_{ij}\log\pi_{ij}-\pi_{ij}\bigr),
\]
where \(C(A)=w\,Y_{\mathrm{norm}}^\top-2\,\Phi A Y^\top\).
At each iteration, the algorithm performs:
(i) an exact update \(A^{(t+1)}=\arg\min_A \mathcal{L}_{\mathrm{OT}}(\pi^{(t)},A)\) (closed form),
and
(ii) an exact update \(\pi^{(t+1)}=\arg\min_{\pi\in\Pi} \mathcal{L}_{\mathrm{OT}}(\pi,A^{(t+1)})\) (Sinkhorn run to convergence).

\paragraph{Non-increasing.}Exact minimization in each block implies
\(\mathcal{L}_{\mathrm{OT}}(\pi^{(t)},A^{(t+1)})\le \mathcal{L}_{\mathrm{OT}}(\pi^{(t)},A^{(t)})\)
and
\(\mathcal{L}_{\mathrm{OT}}(\pi^{(t+1)},A^{(t+1)})\le \mathcal{L}_{\mathrm{OT}}(\pi^{(t)},A^{(t+1)})\).
Hence \(\{\mathcal{L}_{\mathrm{OT}}^{(t)}\}\) is non-increasing.

\paragraph{Lower boundedness.}
Fix $\pi\in\Pi$. The $A$-subproblem is
\[
\min_A\;\; \|A\|_F^2-2\!\left\langle A,\Phi^\top \pi Y\right\rangle
\;+\;\underbrace{\left\langle w\,Y_{\mathrm{norm}}^\top,\pi\right\rangle
+ \varepsilon\!\sum_{i,j}\!\bigl(\pi_{ij}\log\pi_{ij}-\pi_{ij}\bigr)}_{\text{independent of }A}.
\]
Its gradient and Hessian are
\[
\nabla_A = 2A-2\Phi^\top \pi Y,\qquad 
\nabla_A^2 = 2I \succ 0,
\]
So the subproblem is strictly convex and has a unique minimizer
\[
A^\star(\pi)=\Phi^\top \pi Y.
\]
Plugging this back gives the reduced objective.
\[
\begin{aligned}
\widetilde{\mathcal L}_{\mathrm{OT}}(\pi)
&:= -\bigl\|\Phi^\top \pi Y\bigr\|_F^2
+\big\langle w\,Y_{\mathrm{norm}}^\top,\pi\big\rangle
+ \varepsilon\!\sum_{i,j}\!\bigl(\pi_{ij}\log\pi_{ij}-\pi_{ij}\bigr),
\end{aligned}
\]
where we used
$\|A\|_F^2-2\langle A,\Phi^\top\pi Y\rangle
= \|A-\Phi^\top\pi Y\|_F^2-\|\Phi^\top\pi Y\|_F^2$.

Since the domain of $\pi$ is bounded and $\pi\mapsto \Phi^\top \pi Y$ is continuous, $-\|\Phi^\top \pi Y\|_F$ is lower-bounded.
Similarly, $\langle w\,Y_{\mathrm{norm}}^\top,\pi\rangle$ is lower-bounded.
For the entropy term, by Jensen's inequality,
\[
\sum_{i,j}\pi_{ij}\log\pi_{ij}\ \ge\ -\log(n^2),
\]
and $-\sum_{i,j}\pi_{ij}=-1$.
Therefore
\[
\varepsilon\sum_{i,j}\bigl(\pi_{ij}\log\pi_{ij}-\pi_{ij}\bigr)
\ \ge\ -\varepsilon\bigl(2\log n+1\bigr).
\]
As a result, \(\widetilde{\mathcal L}_{\mathrm{OT}}(\pi)\) is lower bounded and consequently, so is \(\mathcal{L}_{\mathrm{OT}}(\pi,A)\).

\paragraph{Stationary points.} The \(A\)-subproblem is a strongly convex and therefore has the unique minimizer \(A^\star(\pi)=\Phi^\top \pi Y\). The \(\pi\)-subproblem is strictly convex because \(\varepsilon>0\) and therefore has a unique minimizer characterized by the KKT system, that is, there exist dual scalings \(u,v\) such that \(\log \pi^\star = -C(A^\star)/\varepsilon + u\mathbf{1}^\top + \mathbf{1}v^\top\) together with the row and column constraints~\citep{NIPS2013_af21d0c9}. Standard results on two-block coordinate descent then imply that any limit point \((\pi^\star,A^\star)\) is block-optimal and satisfies the first-order conditions
\[
\nabla_A \mathcal{L}_{\mathrm{OT}}(\pi^\star,A^\star)=2A^\star-2\Phi^\top \pi^\star Y=0,
\]
as well as the KKT conditions for \(\pi^\star\). Consequently, any limit point is stationary.

\paragraph{Computational complexity.}For the per-iteration complexity, forming \(C(A)=wY_{\mathrm{norm}}^\top-2\,\Phi A Y^\top\) costs \(\mathcal{O}(n^2+nmd_y)\), updating \(A=\Phi^\top \pi Y\) costs \(\mathcal{O}(nmd_y)\), and one Sinkhorn iteration costs \(\mathcal{O}(n^2)\). With \(T\) inner iterations, one outer iteration costs \(\mathcal{O}\bigl((1+T)n^2+nmd_y\bigr)\),  where $T$ is the number of iterations. When $T$ is bounded ($T$ is usually $\sim 15$ in our experiments), the computational complexity is \(\mathcal{O}(n^2)\).
\end{proof}

\begin{proposition}
Standard GWOT is a special case of generalized GWOT with $k(x,x') = -\|x-x'\|^2$ and can also be optimized via Algorithm~\ref{alg:opt}.
\end{proposition}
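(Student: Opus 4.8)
The plan is to prove the statement in two parts: first that setting $k(x,x') = -\|x-x'\|^2$ turns \eqref{eqn:otobjective} into the standard Gromov--Wasserstein problem, and second that, although the induced Gram matrix is \emph{not} positive semidefinite, the objective can still be recast in the variational form required by Algorithm~\ref{alg:opt}.

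For the first part, I would start from the standard GW cost with squared-Euclidean dissimilarities, $c_{\mathrm{GWOT}}(x,x',y,y') = \big(\|x-x'\|^2 - \|y-y'\|^2\big)^2$, and expand the square as $\|x-x'\|^4 - 2\|x-x'\|^2\|y-y'\|^2 + \|y-y'\|^4$. Integrating against $d\pi(x,y)\,d\pi(x',y')$ and using that $\pi$ has fixed marginals $\mu_\mathcal{X},\mu_\mathcal{Y}$, the first term integrates to $\iint \|x-x'\|^4\,d\mu_\mathcal{X}(x)\,d\mu_\mathcal{X}(x')$ and the third to $\iint \|y-y'\|^4\,d\mu_\mathcal{Y}(y)\,d\mu_\mathcal{Y}(y')$, both independent of $\pi$. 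Hence minimizing $\iint c_{\mathrm{GWOT}}\,d\pi\,d\pi$ over $\Pi(\mu_\mathcal{X},\mu_\mathcal{Y})$ is equivalent, up to an additive constant and the positive factor $2$, to minimizing \eqref{eqn:otobjective} with $k(x,x') = -\|x-x'\|^2$, so both problems have the same optimal plan $\piot$.

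For the second part, the obstruction is that $G = [-\|x_i-x_j\|^2]_{i,j}$ is only conditionally positive definite, so the factorization $G=\Phi^\top\Phi$ underlying Theorem~\ref{thm:usefulone} does not exist over $\mathbb{R}$. I would resolve this via the rank-two additive splitting $-\|x-x'\|^2 = 2\langle x,x'\rangle + h(x) + h(x')$ with $h(x) := -\|x\|^2$. Since the finite-sample objective \eqref{eqn:discreteobjective} is linear in the kernel slot, it decomposes as the part generated by $k^{(1)}(x,x')=2\langle x,x'\rangle$ --- which \emph{is} positive semidefinite, with the exact feature map $\Phi_i = \sqrt{2}\,x_i \in \mathbb{R}^{d}$, so $m = d$ --- plus the part generated by $k^{(2)}(x,x') = h(x)+h(x')$. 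A short computation, using again that the column sums of $\pi$ are fixed so that $\sum_{i'}\pi_{i'j'} = \hat\mu_\mathcal{Y}(y_{j'})$, shows the $k^{(2)}$-part collapses to $2\sum_{i,j} h(x_i)\,g_j\,\pi_{ij}$ with $g_j := \tfrac{1}{n}\sum_{j'}\|y_j-y_{j'}\|^2$, i.e.\ it is \emph{linear} in $\pi$. Feeding $\Phi_i = \sqrt2\,x_i$ into Theorem~\ref{thm:usefulone} and adding this linear term, the problem becomes exactly \eqref{eqn:discrete} except that the per-coordinate cost $c(x_i,y_j)$ in \eqref{eq:subobj-pi} picks up the extra summand $2h(x_i)g_j$, a rank-one matrix precomputable in $\mathcal{O}(n^2)$ time and fixed throughout. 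Consequently the closed-form $A$-update and the Sinkhorn $\pi$-update of Algorithm~\ref{alg:opt} apply verbatim, and the monotonicity, stationarity, and $\mathcal{O}(n^2)$-per-iteration claims of Proposition~\ref{thm:con} are unaffected, since the added term is linear in $\pi$ and independent of $A$.

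I expect the main obstacle to be exactly this non-PSD issue: without spotting the additive splitting one might wrongly conclude that Algorithm~\ref{alg:opt} does not apply. A secondary remark worth recording --- reassuring but not needed for the argument --- is that the splitting is not unique: any shift $h(x)\mapsto h(x)+\text{const}$ alters the cost matrix only by a term of the form (constant)$\times\|y_j\|^2$ or a rank-one perturbation that leaves the constrained optimal $\pi$ invariant.
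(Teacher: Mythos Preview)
Your first part---expanding $(\|x-x'\|^2-\|y-y'\|^2)^2$, observing that the $\|x-x'\|^4$ and $\|y-y'\|^4$ terms integrate against the marginals only, and identifying the cross term with \eqref{eqn:otobjective} for $k(x,x')=-\|x-x'\|^2$---is exactly what the paper does.

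For the second part you take a genuinely different route. The paper works with $k'(x,x')=\|x-x'\|^2$, simply \emph{asserts} that the associated Gram matrix admits a real factorization $K'=\Phi\Phi^\top$, and then writes down a sign-flipped variant of the variational identity in Theorem~\ref{thm:usefulone}; from there it declares that the $A$- and Sinkhorn updates of Algorithm~\ref{alg:opt} carry over verbatim. This glosses over precisely the obstruction you identify: the squared-distance matrix has zero trace and positive off-diagonal entries, hence is indefinite, so no real Cholesky factor exists. Your additive splitting $-\|x-x'\|^2=2\langle x,x'\rangle+h(x)+h(x')$ with $h(x)=-\|x\|^2$ sidesteps this cleanly. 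The $2\langle x,x'\rangle$ part is genuinely PSD with the explicit feature map $\Phi_i=\sqrt{2}\,x_i$, so Theorem~\ref{thm:usefulone} applies to it without caveat; and your computation that the separable $h(x)+h(x')$ part collapses, via the fixed column sums $\sum_{i'}\pi_{i'j'}=\tfrac{1}{n}$, to the linear-in-$\pi$ term $2\sum_{i,j}\pi_{ij}h(x_i)g_j$ is correct. The net effect is the same modified Algorithm~\ref{alg:opt} the paper arrives at, but your derivation actually justifies the factorization step, and as a side benefit supplies a concrete low-rank feature map with $m=d_x$ rather than an abstract one of unspecified rank.
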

\begin{proof}
Consider the squared-loss Gromov--Wasserstein (GW) objective with Euclidean
dissimilarities:
\begin{equation}\label{eq:gw-def}
\mathrm{GW}(\pi)
=\sum_{i,i'=1}^n\sum_{j,j'=1}^n
\big(\, \|x_i-x_{i'}\|^2 - \|y_j-y_{j'}\|^2 \,\big)^2 \; \pi_{ij}\,\pi_{i'j'}.
\end{equation}

Expanding the square yields
\begin{equation}\label{eq:gw-expand}
\begin{aligned}
\mathrm{GW}(\pi)
&=\sum_{i,i'=1}^n \|x_i-x_{i'}\|^4\, \hat\mu_{\mathcal{X}}(x_i)\,\hat\mu_{\mathcal{X}}(x_{i'})
+\sum_{j,j'=1}^n \|y_j-y_{j'}\|^4\, \hat\mu_{\mathcal{Y}}(y_j)\,\hat\mu_{\mathcal{Y}}(y_{j'}) \\
&\quad-\,2\!\!\sum_{i,i'=1}^n\sum_{j,j'=1}^n \|x_i-x_{i'}\|^2\|y_j-y_{j'}\|^2\, \pi_{ij}\,\pi_{i'j'}.
\end{aligned}
\end{equation}

The first two terms in \eqref{eq:gw-expand} depend only on the marginals and 
are therefore independent of the coupling~$\pi$. They can be discarded without 
affecting the minimizer. Thus the minimization of \(\mathrm{GW}(\pi)\) is 
equivalent to
\begin{equation}\label{eq:gw-reduced}
\arg\min_{\pi}\,\mathrm{GW}(\pi)
=\arg\min_{\pi}\Bigl\{-2\!\!\sum_{i,i'=1}^n\sum_{j,j'=1}^n \|x_i-x_{i'}\|^2\|y_j-y_{j'}\|^2\, \pi_{ij}\,\pi_{i'j'}\Bigr\}.
\end{equation}

Define the kernel weight
\begin{equation}\label{eq:kernel-def}
k'(x_i,x_{i'}) := \|x_i-x_{i'}\|^2.
\end{equation}
Then \eqref{eq:gw-reduced} can be written as
\begin{equation}\label{eq:gw-to-ot}
\arg\min_{\pi}\,
-\sum_{i,i'=1}^n\sum_{j,j'=1}^n
k'(x_i,x_{i'}) \,\|y_j-y_{j'}\|^2 \,  \pi_{ij}\,\pi_{i'j'},
\end{equation}
which conforms with our objective~\eqref{eqn:discreteobjective}.

This naturally leads to a variant of Theorem~\ref{thm:eqn}:

\begin{equation}
\begin{aligned}
&\frac{1}{2}\inf_{\pi \in \Pi(\hat\mu_{\mathcal{X}}, \hat\mu_{\mathcal{Y}})} 
    -\sum_{i,i'=1}^n\sum_{j,j'=1}^n  k'(x_i,x_{i'}) \, \|y_j-y_{j'}\|^2 \, \pi_{ij}\,\pi_{i'j'} \\
&\quad=\inf_{\pi\in\Pi(\hat\mu_{\mathcal{X}},\hat\mu_{\mathcal{Y}})}\;\inf_{A\in \mathbb{R}^m\times\mathbb{R}^{d_y}}
\Bigl\{\|A\|^2_{\mathbb{R}^m\times\mathbb{R}^{d_y}} -\!\!\sum_{i=1}^n\sum_{j=1}^n\|y_j\|^2\,w_i\,\pi_{ij}
+2\sum_{i=1}^n\sum_{j=1}^n\langle A,\Phi(x_i)\times y_j\rangle_{\mathbb{R}^m\times\mathbb{R}^{d_y}}\,\pi_{ij}\Bigr\}.
\end{aligned}
\end{equation}
\noindent\textit{where} $K'\in\mathbb{R}^{n\times n}$ is the Gram matrix with $K'_{ii'}=k'(x_i,x_{i'})$ admitting a factorization $K'=\Phi\Phi^\top$ for some $\Phi\in\mathbb{R}^{n\times m}$ whose $i$-th column equals $\Phi(x_i)^\top$, and $w_i:=\sum_{i'=1}^n k'(x_i,x_{i'})\,\hat\mu_{\mathcal X}(x_{i'})$.

The  remaining steps of the algorithm proceed exactly as before: update $\pi$ via the (entropically regularized) Sinkhorn step,
and update $A$ in closed form by $A \leftarrow \Phi^\top \pi Y$.
The monotone decrease argument and the $\mathcal{O}(n^2)$ per-iteration
Complexity carries over verbatim.

\end{proof}

\begin{theorem}\label{thm:dec}
Let $u^*$ denote the drift field corresponding to the optimal solution of~\eqref{eqn:dcfmloss}. Then $u^*$ satisfies
$u^*(z,c,t,0)=\mathbb E[\dot a_t x(0)+\dot b_t x(1) \mid x(t)=z,\,y(1)=c]$, and $u^*(c,z,t,1)=\mathbb E[\dot a_t y(0)+\dot b_t y(1) \mid y(t)=z,\, x(1)=c]$.
\end{theorem}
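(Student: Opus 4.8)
The statement is the natural ``dual'' analogue of the standard conditional flow-matching optimality result reviewed in Section~\ref{sec:flowmatching}, so the plan is to reduce the joint objective~\eqref{eqn:dcfmloss} to two decoupled conditional flow-matching problems and invoke the classical argument on each. The first observation is that $\mathcal L_{\mathrm{DCFM}}(u) = (1-\alpha)\,\mathbb E[\ell_x(u)] + \alpha\,\mathbb E[\ell_y(u)]$ is a sum of two terms that depend on the network $u_\theta$ through \emph{disjoint} slices of its input space: $\ell_x$ queries $u$ only at points of the form $(\,\cdot\,,\,\cdot\,,t,0)$ (role flag $r=0$), while $\ell_y$ queries $u$ only at points of the form $(\,\cdot\,,\,\cdot\,,t,1)$ (role flag $r=1$). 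Since the role flag is part of the argument and the two slices never overlap, the minimization over the (sufficiently expressive) function $u$ separates: we may minimize $\mathbb E[\ell_x(u)]$ over the $r=0$ slice and $\mathbb E[\ell_y(u)]$ over the $r=1$ slice independently, and the weights $(1-\alpha)$ and $\alpha$, being strictly positive, do not affect either minimizer. This is the content of the informal remark in the main text that ``the two objectives are not in conflict,'' and it is the step I would state carefully first.

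\textbf{Second step: solve each slice pointwise.} Consider the $r=0$ term. Writing out the expectation explicitly, $\mathbb E[\ell_x(u)] = \mathbb E_{t}\,\mathbb E_{(x(1),y(1))\sim\piot}\,\mathbb E_{x(0)\sim p_0^x}\big\|u(x(t),y(1),t,0) - (\dot a_t x(0)+\dot b_t x(1))\big\|^2$, where $x(t)=a_t x(0)+b_t x(1)$. Condition on the event $\{x(t)=z,\ y(1)=c,\ t \text{ fixed}\}$; the inner conditional expectation of the squared norm is minimized, over the single vector $u(z,c,t,0)\in\mathbb R^{d_x}$, by the conditional mean of the target, i.e. $u^*(z,c,t,0)=\mathbb E[\dot a_t x(0)+\dot b_t x(1)\mid x(t)=z,\ y(1)=c]$ — this is just the elementary fact that $\arg\min_{a}\mathbb E\|W-a\|^2 = \mathbb E[W]$ applied to the regular conditional distribution. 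Using the tower property, the pointwise minimizer over all $(z,c,t)$ assembles into a global minimizer of $\mathbb E[\ell_x(u)]$ restricted to the $r=0$ slice. The $r=1$ term is handled identically with the roles of $x$ and $y$ swapped: interpolating $y(t)=a_t y(0)+b_t y(1)$ and conditioning on $\{y(t)=z,\ x(1)=c\}$ gives $u^*(c,z,t,1)=\mathbb E[\dot a_t y(0)+\dot b_t y(1)\mid y(t)=z,\ x(1)=c]$.

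\textbf{Combining.} Since the two slices are disjoint and the objective is the positively-weighted sum of the two slice-losses, the function $u^*$ defined piecewise by the two conditional expectations above is simultaneously optimal for both terms, hence optimal for $\mathcal L_{\mathrm{DCFM}}$; conversely any minimizer must agree with these conditional means almost everywhere on each slice (the loss is strictly convex in the output vector at each conditioning point, so the minimizer is unique up to null sets). This yields exactly the two displayed identities in the theorem. I would also note in passing the measure-theoretic hygiene needed for the pointwise argument: the joint law of $(t,x(0),x(1),y(1))$ admits a disintegration with respect to $(x(t),y(1),t)$, and the conditional expectation is defined up to a null set — standard but worth one sentence.

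\textbf{Anticipated main obstacle.} The only genuine subtlety — and the place I would be most careful — is justifying the interchange of ``minimize over the function $u$'' with ``minimize pointwise for each value of the conditioning variables.'' This is legitimate because $u$ is an \emph{unconstrained} (nonparametric) drift field, so its values at different input points can be chosen independently; in the actual parameterized network $u_\theta$ with a shared backbone and two heads, the claim is really about the population optimum in the nonparametric limit, and one should say so explicitly rather than pretend the finite network attains it. A secondary, purely bookkeeping point is checking that the disjointness of the $r=0$ and $r=1$ slices is exactly what removes any coupling between the two heads through the shared parameters at the population level — this is why the ``mute-masking'' strategy in training is loss-preserving. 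Neither point is deep, but stating them precisely is what turns the heuristic ``the objectives don't conflict'' into a proof.
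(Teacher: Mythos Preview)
Your proposal is correct and follows essentially the same line as the paper's proof: both reduce to the elementary fact that the $L^2$-minimizer of $\mathbb E\|u-V\|^2$ over measurable $u$ is the conditional expectation of $V$ given the inputs. The only organizational difference is that the paper folds the role flag $r$ into a single conditioning $\sigma$-algebra $\mathcal A=\sigma(Z,C,r,t)$ and applies the orthogonal-projection (Pythagorean) identity once before reading off the two cases, whereas you first split the loss along the disjoint $r=0$ and $r=1$ slices and then apply the conditional-mean argument separately --- a cosmetic reordering of the same steps.
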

\begin{proof}
We work at the population level. Let $r\in\{0,1\}$ denote the role indicator ($r=0$ for the $x$-direction, $r=1$ for the $y$-direction), and define, for each training draw, the tuple.
\[
(Z,C,V,t)=
\begin{cases}
(x(t),\ y(1),\ \dot a_t x(0)+\dot b_t x(1),\ t), & r=0,\\[2mm]
(y(t),\ x(1),\ \dot a_t y(0)+\dot b_t y(1),\ t), & r=1,
\end{cases}
\]
where $(x(1),y(1))\sim\pi^{\mathrm{OT}}$, $x(0)\sim p_0^x$, $y(0)\sim p_0^y$, and $x(t)=a_t x(0)+b_t x(1)$, $y(t)=a_t y(0)+b_t y(1)$ with differentiable $a_t,b_t$.

The DCFM population loss is
\[
\mathcal L(u)\;=\;\mathbb E\big[\ \|\,u(Z,C,t,r)-V\,\|^2\ \big],
\]
where the expectation is taken over the joint law of $(Z,C,r,V,t)$. Set the $\sigma$-algebra
\[
\mathcal A:=\sigma(Z,C,r,t).
\]
The random vector $U:=u(Z,C,t,r)$ is thus $\mathcal A$-measurable and square-integrable. By the orthogonal projection identity,
\begin{equation}\label{eq:pyth}
\mathbb E\big[\|V-U\|^2\big]
=\mathbb E\big[\|V-\mathbb E(V\mid \mathcal A)\|^2\big]
+\mathbb E\big[\|U-\mathbb E(V\mid \mathcal A)\|^2\big].
\end{equation}
The rightmost term is minimized when $U=\mathbb E(V\mid \mathcal A)$, hence
\[
u^*(Z,C,t,r)\;=\;\mathbb E\!\left[V\mid Z,C,r,t\right].
\]
Thus, by specifying the conditional expectation pointwise, we obtain the role-wise form
\begin{equation}\label{eq:rolewise}
u^*(z,c,t,r)\;=\;\mathbb E\!\left[V\mid Z=z,\ C=c,\ r,\ t\right].
\end{equation}
Unpacking the definitions of $(Z,C,V)$ under $r=0$ and $r=1$ gives exactly the two cases stated in the theorem:
\[
u^*(z,c,t,0)=\mathbb E[\dot a_t x(0)+\dot b_t x(1) \mid x(t)=z,\,y(1)=c,\,t],\qquad
u^*(c,z,t,1)=\mathbb E[\dot a_t y(0)+\dot b_t y(1) \mid y(t)=z,\,x(1)=c,\,t],
\]
and the explicit dependence on $t$ can be suppressed in notation when no ambiguity arises.

Finally, taking total expectation in \eqref{eq:pyth} and conditioning on the events $\{r=0\}$ and $\{r=1\}$ shows the loss decomposes additively across directions:
\begin{align*}
\mathcal L(u)-\mathcal L(u^*)
&=\;\mathbb E\big[\|u(Z,C,t,r)-u^*(Z,C,t,r)\|^2\big] \\[1ex]
&=\;\mathbb E\big[\|u(Z,C,t,0)-u^*(Z,C,t,0)\|^2 \mid r=0\big]\;\mathbb P(r=0) \\[1ex]
&\quad+\;\mathbb E\big[\|u(Z,C,t,1)-u^*(Z,C,t,1)\|^2 \mid r=1\big]\;\mathbb P(r=1).
\end{align*}
\end{proof}

\section{Evaluation Metrics}
\label{sec:evaluationmetric}
\paragraph{Notation.}
For each evaluation run, we subsample $N$ samples $\{x_i\}_{i=1}^N$ from the testing set, and generate their embedding vectors $\{y_i\}_{i=1}^N$. Then we reconstruct $\{\hat{x}_i\}_{i=1}^N$ using the trained $u_\theta$.
Unless stated otherwise, metrics are reported as mean $\pm$ standard deviation aggregated over $R$ independent runs.

\paragraph{Standardization.}
We apply standardization procedures before computing image-based evaluation metrics:
(i) if needed, single-channel images are broadcast to 3 channels;
(ii) we restore dataset statistics using per-dataset $(\mu,\sigma)$ and clamp to $[0,1]$ for FID;
(iii) for LPIPS, we map images to $[-1,1]$ after restoration.
Distances in pixel space use Euclidean distance on flattened images.

\subsection{Fr\'echet Inception Distance (FID)}
Let $\psi(\cdot)\in\mathbb{R}^{2048}$ denote Inception-V3 pool features applied to images after standardization.
Let $\{\psi(x_i)\}_{i=1}^N$ and $\{\psi(\hat{x}_i)\}_{i=1}^N$ have empirical means and covariances $(\mu_r,\Sigma_r)$ and $(\mu_g,\Sigma_g)$, respectively.
FID is the Fr\'echet distance between Gaussian approximations to these feature distributions:
\begin{equation}
\mathrm{FID} \;=\; \|\mu_r-\mu_g\|_2^2 \;+\; \mathrm{Tr}\!\Big(\Sigma_r + \Sigma_g - 2(\Sigma_r^{1/2}\Sigma_g\Sigma_r^{1/2})^{1/2}\Big).
\end{equation}
The FID score is a widely adopted metric to evaluate the quality of generated samples.


\subsection{Wasserstein distance of embeddings}
Since the objective of the dimension reduction algorithm is to match $\hat{\mu}_\setX$ to a tractable distribution $\hat{\mu}_{\setY}$, it is also important to measure the distance between $\hat{\mu}_{\setY}$ and the target Gaussian distribution. We use the Wasserstein OT distance to characterize such a distributional distance. Let $Y=\{y_j\}_{j=1}^n$ be the embeddings generated by DCFM. We draw the same number of points $Z=\{z_j\}_{j=1}^n$ from the standard Gaussian $\mathcal N(0,I_2)$. Treating both sets as uniform discrete measures, we compute the 2-Wasserstein OT loss between $Y$ and $Z$ with squared Euclidean cost. In practice, we use a Sinkhorn solver with a small entropic regularization: \[
\text{WOT}_\varepsilon(Y,Z)
=\min_{\gamma\in\mathbb{R}_+^{n\times n}}
\sum_{i,j}\gamma_{ij}\,\|y_i-z_j\|^2
+\varepsilon\sum_{i,j}\gamma_{ij}(\log\gamma_{ij}-1)
\quad\text{s.t.}\quad
\sum_j\gamma_{ij}=\tfrac{1}{n},\;
\sum_i\gamma_{ij}=\tfrac{1}{n}.
\]
The reported value is the resulting OT objective, which quantifies how close the inferred embeddings are to the Gaussian reference. Values of FID and OT are reported in Table~\ref{Table:OTandFid} in the main paper.

\subsection{Generalized GWOT objective}
The original generalized GWOT OT loss is
\begin{equation}
\label{eq:gen-ot}
\mathcal L(\pi)
=\sum_{i,i',j,j'=1}^n \pi_{ij}\,\pi_{i'j'}\, k(x_i,x_{i'})\,\|y_j-y_{j'}\|^2.
\end{equation}

In practice, the dimension reduction algorithms would generate a set of discrete samples $Y$ based on $X$. Therefore, we define a binary transport plan that pairs $x_i$ with its inferred $y_i$, i.e.,
\[
\pi_{ij}=\tfrac{1}{n}\,\mathbf 1\{i\text{ matches }j\}.
\]
Substituting this into \eqref{eq:gen-ot} gives
\[
{\text{GWOT}}
= \sum_{i,i',j,j'} \tfrac{1}{n}\mathbf 1\{i\text{ matches }j\}\cdot \tfrac{1}{n}\mathbf 1\{i'\text{ matches }j'\}\, k(x_i,x_{i'})\,\|y_j-y_{j'}\|^2=\frac{1}{n^2}\sum_{i,j=1}^n k(x_i,x_j)\,\|y_i-y_j\|^2.
\]

This objective measures how well the matching between the original samples $x$ and the embeddings $y$ align with user-defined objective.

\subsection{Aggregation Across Runs}
Let $m^{(r)}$ be a scalar metric measured on run $r \in \{1,\dots,R\}$.
We report
\begin{equation}
\overline{m} \;=\; \frac{1}{R}\sum_{r=1}^{R} m^{(r)},
\qquad
s(m) \;=\; \sqrt{\frac{1}{R-1}\sum_{r=1}^{R}\big(m^{(r)}-\overline{m}\big)^2}.
\end{equation}

In all tables, the best results are \textbf{bolded}, while the second-best results are \ul{underlined}.

\subsection{OT Loss (Generalized GWOT)}
The main paper reports FID and OT loss (Wasserstein). Here we report OT loss (Generalized GWOT). 

\begin{table*}[htbp]
\centering
\begin{tabular}{ccccc}
   \toprule
    & MNIST & CIFAR-10 & AFHQ & TinyImagenet \\
   \midrule
   KPCA & 23.5 $\pm$ 0.1 & \ul{42} $\pm$ 1 & 266 $\pm$ 2& 366 $\pm$ 3 \\
   VAE & \ul{15.8} $\pm$ 0.2 & 56 $\pm$ 2 & \ul{67} $\pm$ 1 & \ul{60} $\pm$ 9 \\
   DiffAE & 370 $\pm$ 1 & 383 $\pm$ 5 & 398 $\pm$ 14& 389 $\pm$ 11 \\
   Info-Diffusion & 71 $\pm$ 2 & 298 $\pm$ 4 & 305 $\pm$ 2 & 298 $\pm$ 3\\
   \midrule
   \textbf{Coupled Flow Matching} & \textbf{1.393 $\pm$ 0.001} & \textbf{1.40 $\pm$ 0.02} & \textbf{1.34 $\pm$ 0.03} & \textbf{1.82 $\pm$ 0.03}\\
   \bottomrule
\end{tabular}
\caption{GWOT}
\label{Table:GWOT}
\end{table*}

\section{Additional experiments}
In this section, we provide additional experiment results and visualizations.
\label{sec:additionalexperiment}
\subsection{Generalized GWOT on CIFAR-10, AFHQ, and TinyImageNet}
We report the performance of Algorithm~\ref{alg:opt-adaptive} (generalized GWOT solver) on CIFAR-10, AFHQ, and TinyImageNet, shown in Figure~\ref{fig:GWOTs}.
\begin{figure*}[h]
    \centering
    \makebox[\textwidth][c]{%
        \begin{subfigure}[t]{0.4\textwidth}
            \centering
            \includegraphics[height=3.5cm]{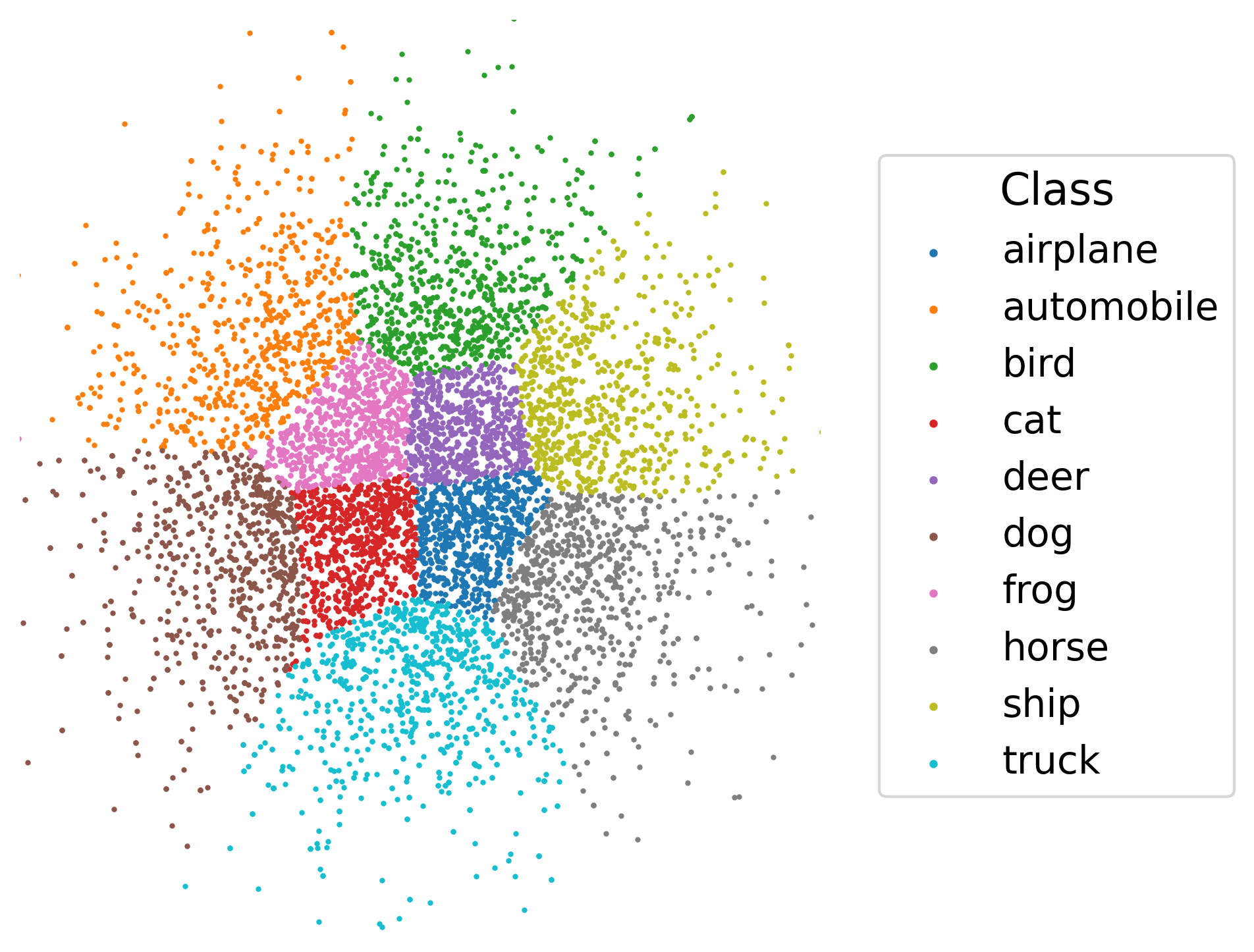}
            \caption{}
        \end{subfigure}
        \hfill
        \begin{subfigure}[t]{0.3\textwidth}
            \centering
            \includegraphics[height=3.5cm]{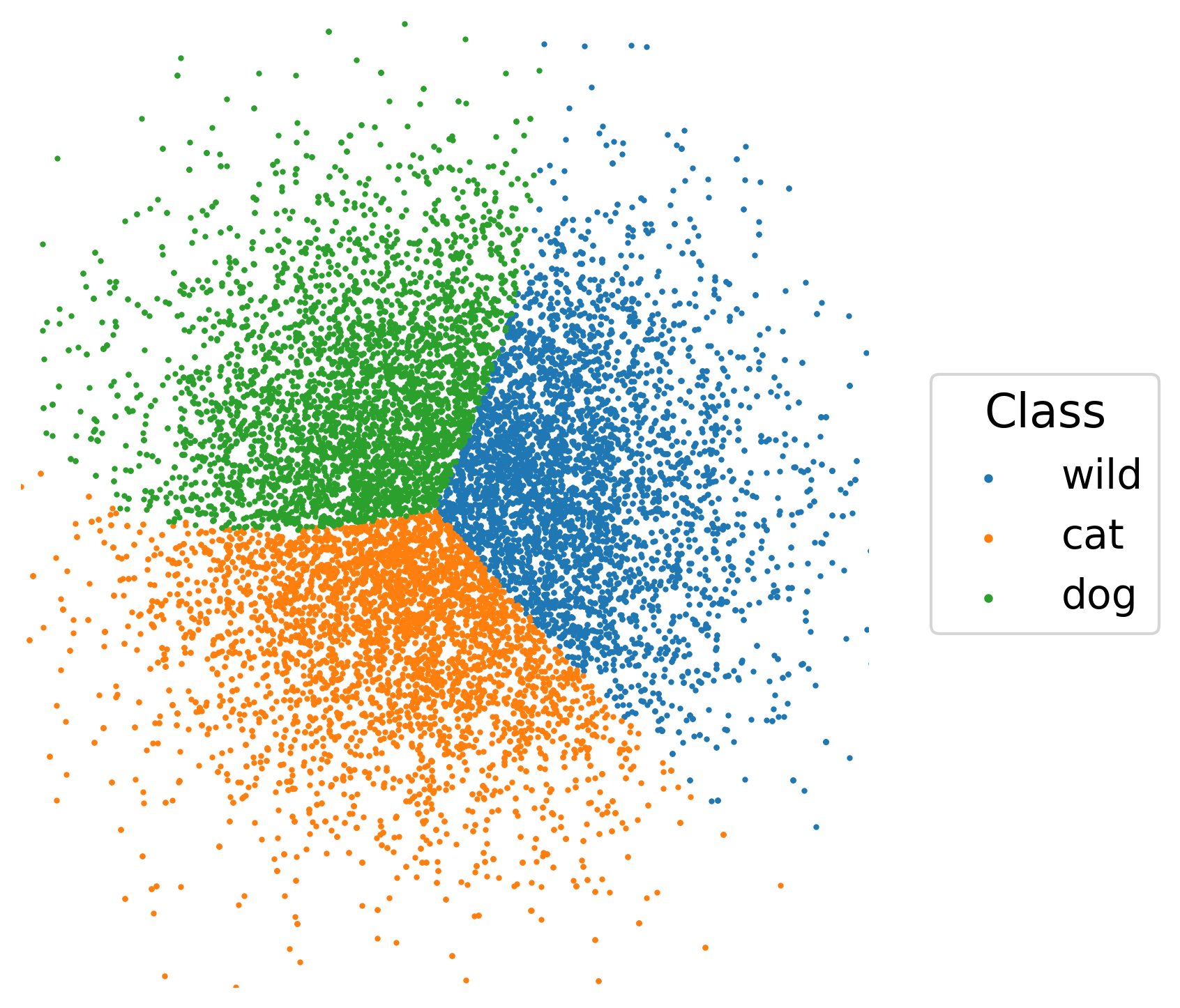}
            \caption{}
        \end{subfigure}
        \hfill
        \begin{subfigure}[t]{0.3\textwidth}
            \centering
            \includegraphics[height=3.5cm]{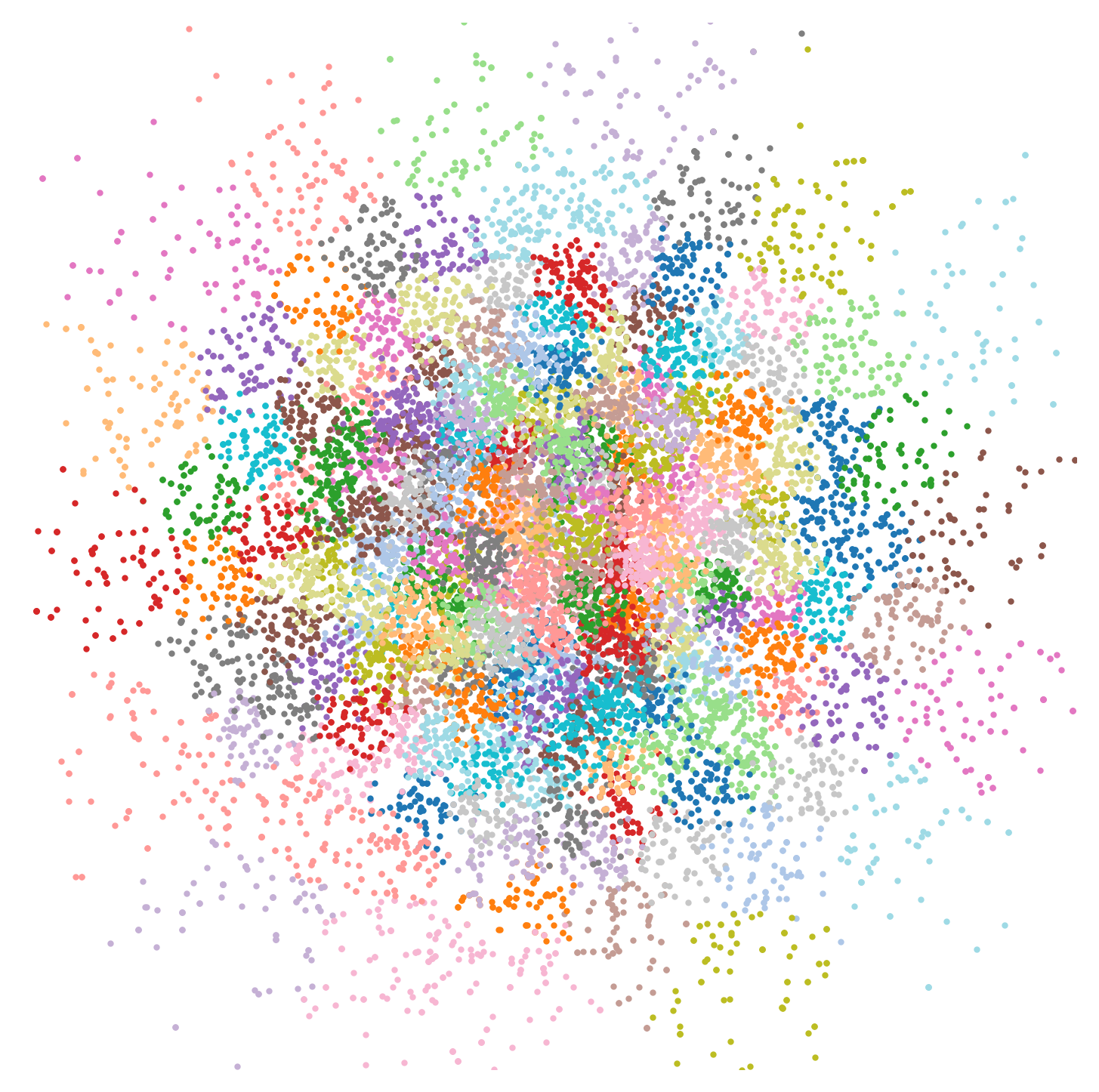}
            \caption{}
        \end{subfigure}
    }
    \caption{Based on the generalized GWOT transport plan $\pi$, each source sample $x$ is associated with a probability distribution over candidate embeddings ${y}$. A single embedding $y$ is randomly sampled according to its assigned weight $\pi_i$. The latent embedding distributions $\mu_{\setY}$ are Gaussian. The datasets used here are (a) CIFAR-10 (10 classes), (b)AFHQ (3 classes), and (c)TinyImageNet (200 classes).}
    \label{fig:GWOTs}
\end{figure*}

In Figure~\ref{fig:GWOTs}, embeddings from the same labels are clearly clustered, even for TinyImagenet with 200 classes. This highlights the effectiveness of Algorithm~\ref{alg:opt-adaptive} to impose user-defined structures. 

\subsection{Extrapolation}

To test whether CPFM internalizes the semantic structure generated by generalized GWOT from Stage~1, we sample a uniform $10\times10$ grid on the 2D square $[-1.5,\,1.5]^2$ in $\setY$. Let $K=10$ and $\Delta=3/(K-1)=1/3$, define $s_u=-1.5+u\Delta$ for $u\in\{0,\dots,9\}$, and the grid points $y_{u,v}=(s_u,s_v)$ for $u,v\in\{0,\dots,9\}$. We treat each $y_{u,v}$ as the embedding and generate an image with \emph{DCFM condition on y}, yielding samples $\{x_{u,v}\}$. We arrange $\{x_{u,v}\}$ in the same $10\times10$ layout (Figure~\ref{fig:geninap}). The generated images show clear class boundaries. Within each class, small moves on the $10\times10$ grid lead to smooth and local visual changes. Distant grid locations yield images that differ more. This behavior is consistent with the heat-kernel prior used in the generalized GWOT stage. Similar samples in $\mathcal X$ are encouraged to occupy nearby positions in the embedding, and dissimilar samples are encouraged to lie farther apart.

\begin{figure*}[h]
    \centering
    \begin{subfigure}[t]{0.3\textwidth}
        \centering
        \includegraphics[width=\textwidth]{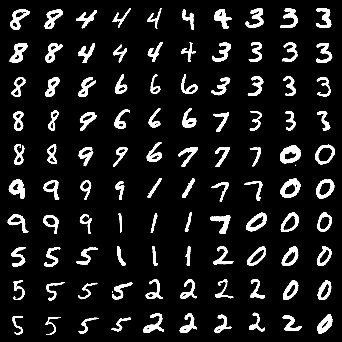}
        \caption{}
    \end{subfigure}
    \begin{subfigure}[t]{0.3\textwidth}
        \centering
        \includegraphics[width=\textwidth]{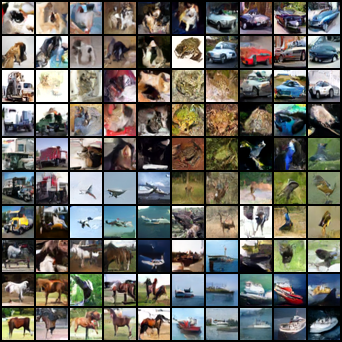}
        \caption{}
    \end{subfigure}
    \begin{subfigure}[t]{0.3\textwidth}
        \centering
        \includegraphics[width=\textwidth]{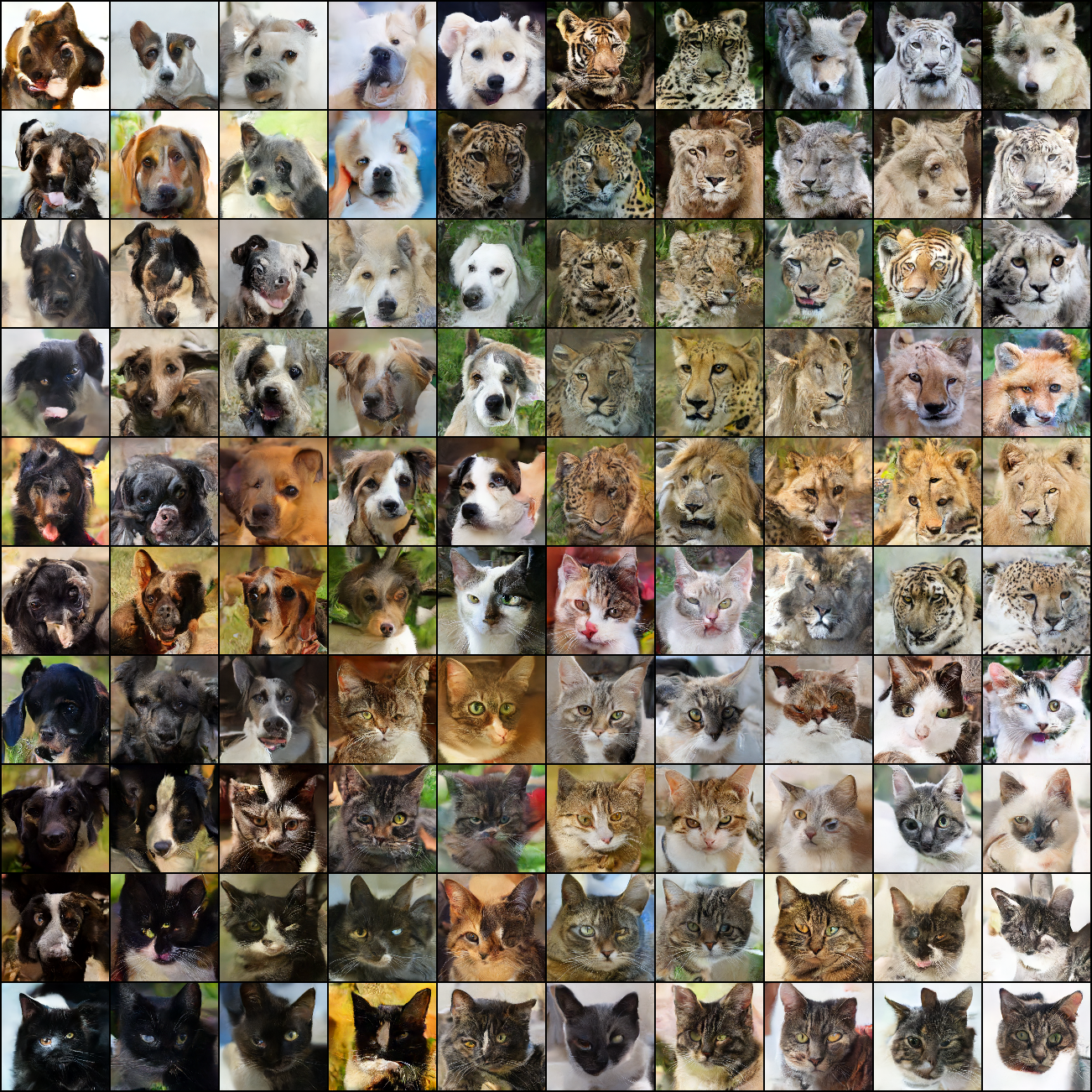}
        \caption{}
    \end{subfigure}

    \caption{Image generations from a uniformly sampled 2D latent grid in $[-1.5, 1.5]^2$. Shown are (a) MNIST, (b) CIFAR-10, and (c) AFHQ.}
    \label{fig:geninap}
\end{figure*}

\subsection{Architecture of the Model}
Our drift network $u_\theta$ uses a U\mbox{-}Net backbone with time conditioning, shown in Figure~\ref{fig:Unet}.
Based on the standard $t$ conditioned U\mbox{-}Net, we treat the two-dimensional embedding $y\in\mathbb R^2$ as a condition as well.
We concatenate $[t, y]$ and inject it at every resolution.
At each encoder or decoder block, we broadcast $[t, y]$ over the spatial grid, concatenate it along the channel dimension with the current feature map of $x$, and apply a linear projection before the block convolutions.
We add self-attention blocks within the U\mbox{-}Net at matched encoder and decoder locations.

\begin{figure*}[h]
    \centering
    \makebox[\textwidth][c]{%
        \begin{subfigure}[t]{0.4\textwidth}
            \centering
            \includegraphics[height=3.8cm]{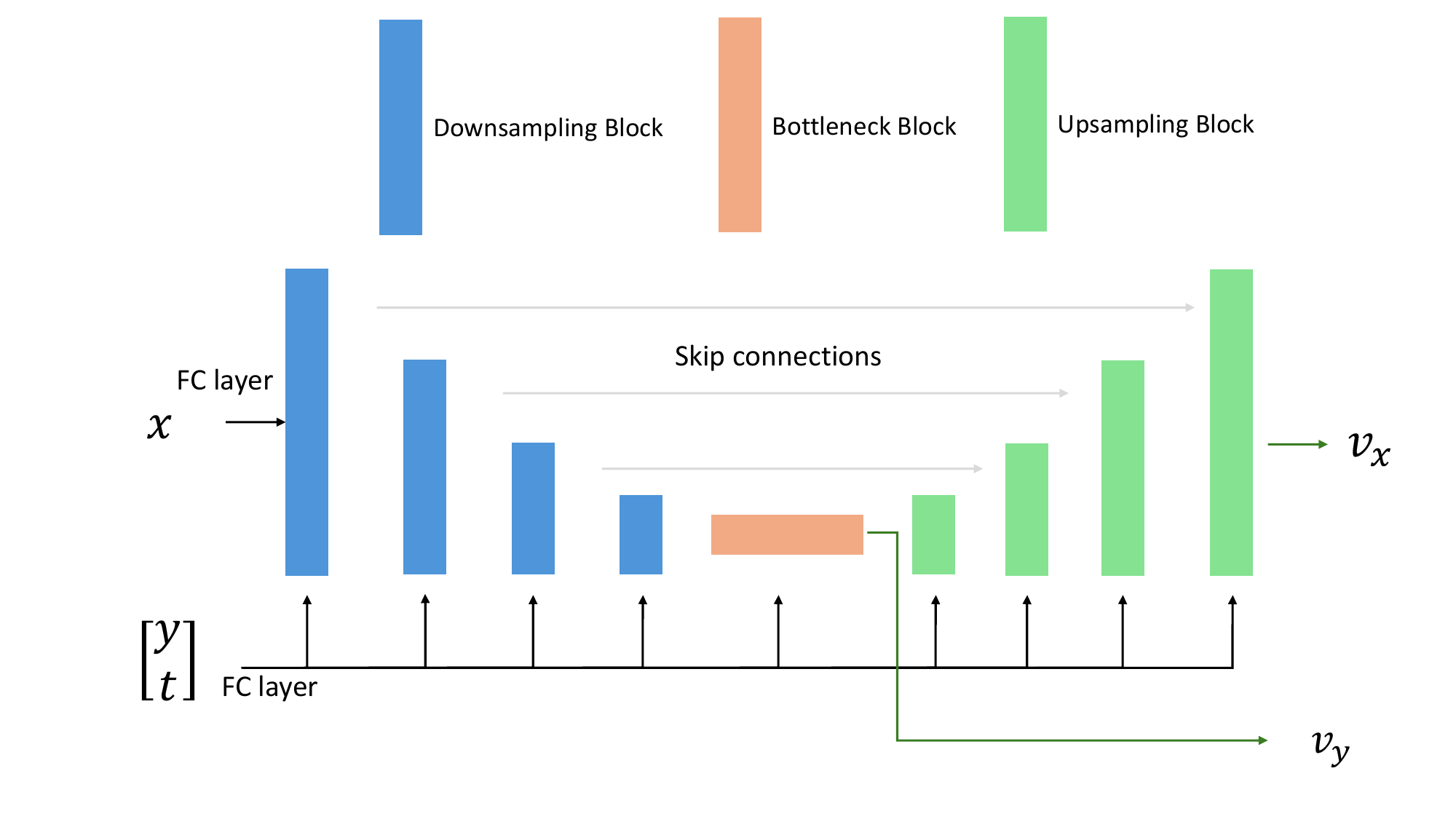}
            \caption{U-Net backbone.}
        \end{subfigure}
        \vspace{-0.8cm}
        \begin{subfigure}[t]{0.3\textwidth}
            \centering
            \includegraphics[height=3.8cm]{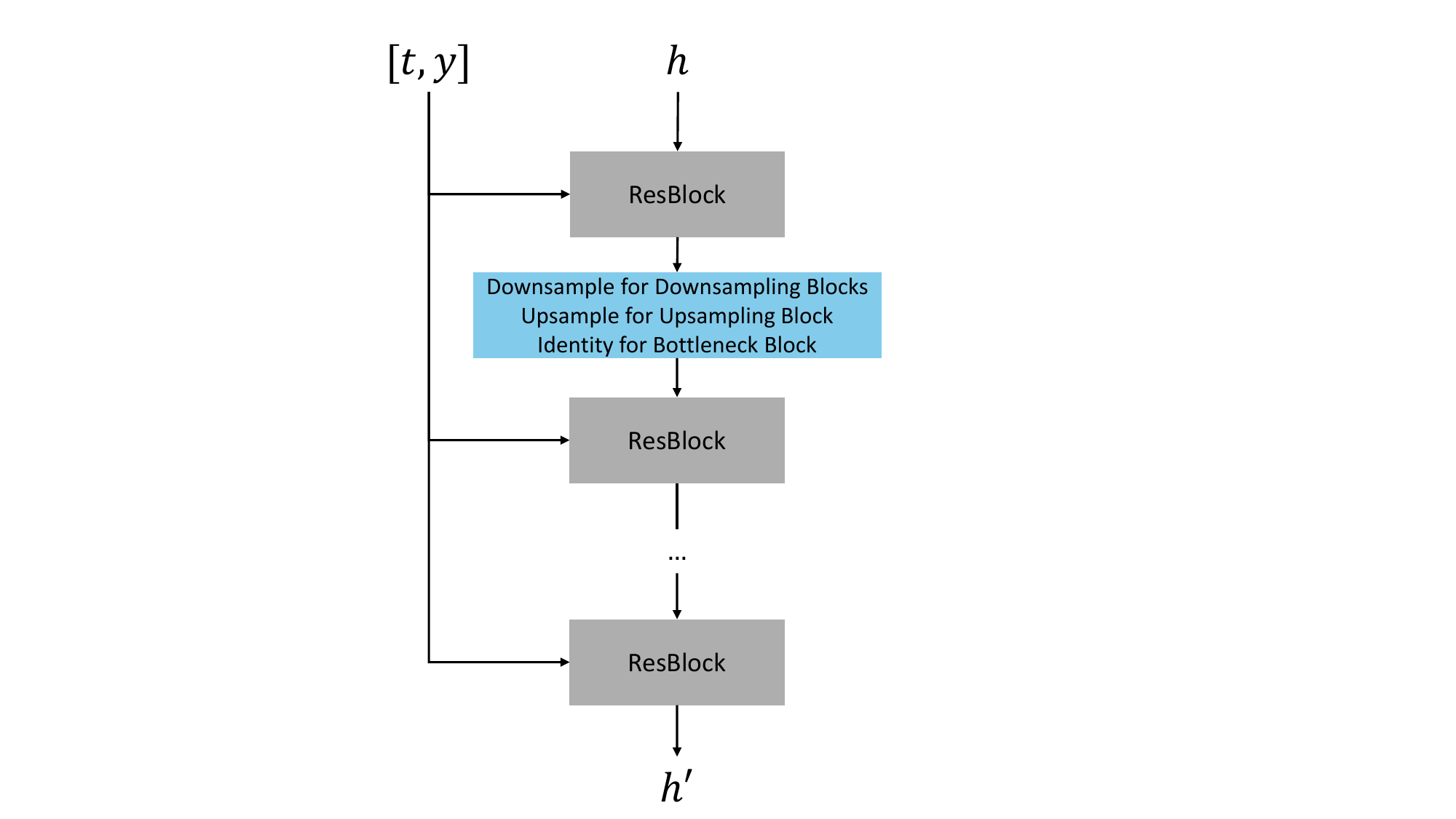}
            \caption{Composition of each block.}
        \end{subfigure}
        \hspace{-1.2cm}
        \begin{subfigure}[t]{0.3\textwidth}
            \centering
            \includegraphics[height=3.8cm]{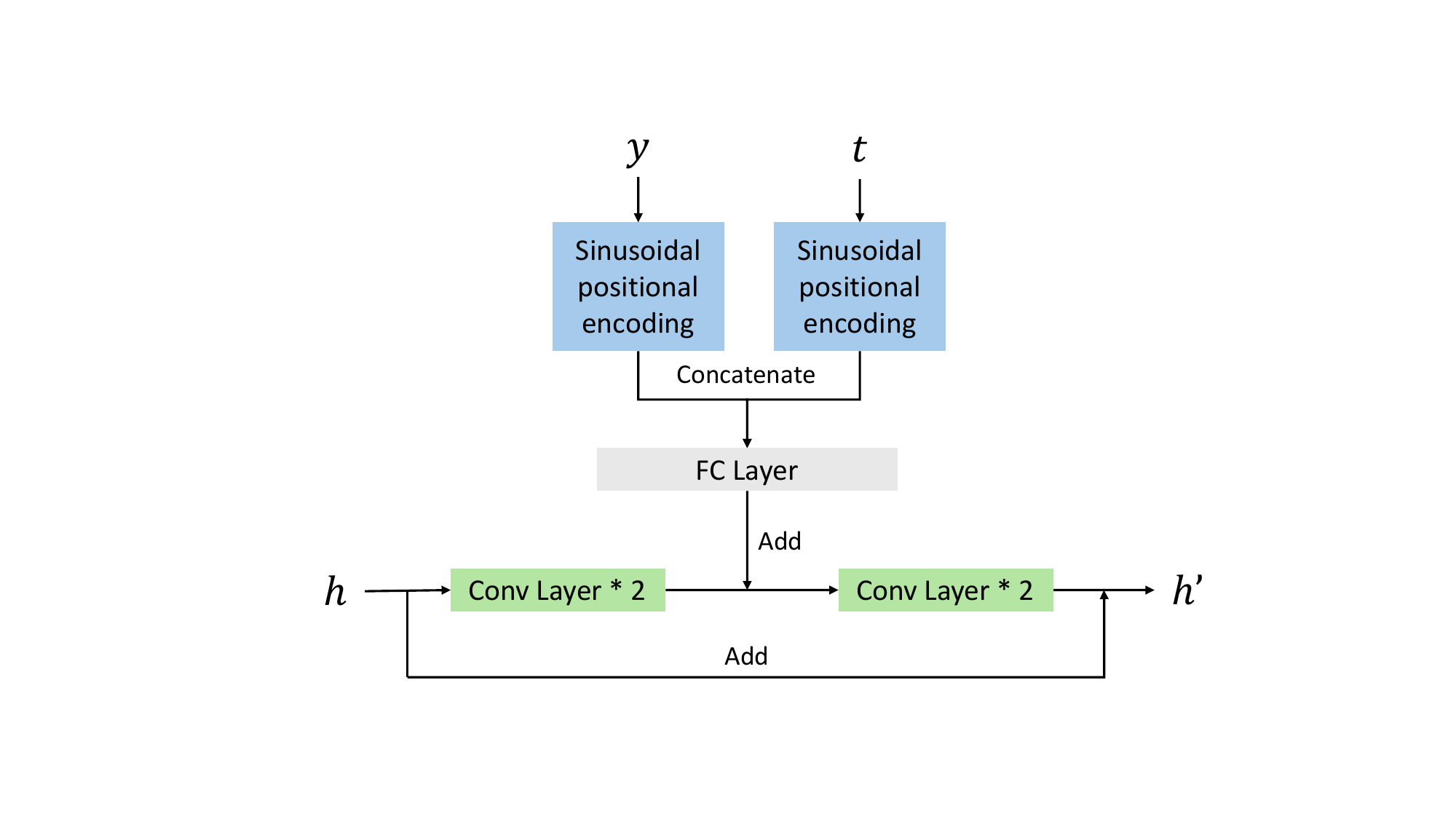}
            \caption{ResBlock.}
        \end{subfigure}
    }
    \caption{Architecture of the Model.}
    \label{fig:Unet}
\end{figure*}

The network has two output heads selected by the role flag $r\in\{0,1\}$.
For the $y$ direction $r=1$, the two-dimensional drift $v_y\in\mathbb R^2$ is produced at the bottleneck head.
For the $x$ direction $r=0$, the drift $v_x$ with the same shape as $x$ is produced by the final decoder head.
During training, only the active head contributes to the loss while the other head is muted.

\subsection{Hyperparameters}
Our model exposes several hyperparameters that control the U\mbox{-}Net capacity and where attention is applied.
\texttt{model\_channels} sets the base width of feature maps.
\texttt{num\_res\_blocks} is the number of residual blocks per resolution level.
\texttt{channel\_mult} is a list of multipliers applied to \texttt{model\_channels} at each level from high to low resolution.
\texttt{attention\_resolutions} lists the spatial sizes at which we insert self-attention blocks in both encoder and decoder.
\texttt{num\_heads} is the number of heads in each attention block.
We use a learning rate of $1\times10^{-4}$ and train all models for 200 epochs.

\begin{table}[h]
\centering
\small
\setlength{\tabcolsep}{5pt}
\begin{tabular}{lccccc}
\toprule
Dataset & \texttt{model\_channels} & \texttt{num\_res\_blocks} & \texttt{channel\_mult} & \texttt{attention\_resolutions} & \texttt{num\_heads} \\
\midrule
MNIST        & 64  & 2 & {[1, 2, 2, 2]}              & {[16]}            & 4 \\
CIFAR\mbox{-}10    & 128 & 2 & {[1, 2, 2, 2]}       & {[16]}         & 4 \\
TinyImageNet & 128 & 2 & {[1, 2, 2, 2]}       & {[16]}         & 4 \\
AFHQ         & 192 & 2 & {[1, 1, 2, 4]}    & {[16, 32]}     & 4 \\
\bottomrule
\end{tabular}
\caption{Default hyperparameter settings used in our runs}
\label{tab:hparams}
\end{table}

\end{document}